%% file: paper.tex
\documentclass[]{fairmeta}

\usepackage{amsmath}
\usepackage{amssymb}
\usepackage{amsthm}
\usepackage{tikz}
\usetikzlibrary{trees}
\usepackage{float}
\tcbuselibrary{listings}

\input{math_commands.tex}

\newtheorem{theorem}{Theorem}[section]

\DeclareMathOperator{\Prob}{\mathbb{P}}
\DeclareMathOperator{\lme}{\mathrm{logmeanexp}}

\newcommand{\Norm}{\mathcal{N}}

\definecolor{lightblue}{HTML}{DAE9FD}
\definecolor{violet}{HTML}{9482C4}
\newtcblisting{promptboxsmall}[2][]{%
    listing options={
        breaklines=true,
        breakautoindent=false,
        breakindent=0ex,
        basicstyle=\ttfamily\scriptsize,
        aboveskip=1pt,
        belowskip=1pt,
        escapechar=|,
    },
    width=\textwidth,
    listing only,
    autoparskip,
    colback=blue!2!white,
    colbacktitle=lightblue,
    coltitle=black,
    title={#2},
    fonttitle=\normalsize,
    #1
}
\newtcblisting{promptbox}[2][]{%
    listing options={
        breaklines=true,
        breakautoindent=false,
        breakindent=0ex,
        basicstyle=\ttfamily\footnotesize,
        aboveskip=1pt,
        belowskip=1pt,
    },
    width=\textwidth,
    listing only,
    autoparskip,
    colback=blue!2!white,
    title={#2},
    fonttitle=\normalsize,
    #1
}
\newtcblisting{promptboxpartone}[2][]{%
    listing options={
        breaklines=true,
        breakautoindent=false,
        breakindent=0ex,
        basicstyle=\ttfamily\scriptsize,
        aboveskip=1pt,
        belowskip=1pt,
        escapechar=|,
    },
    width=\textwidth,
    listing only,
    autoparskip,
    colback=blue!2!white,
    colbacklower=blue!2!white,
    colbacktitle=lightblue,
    coltitle=black,
    title={#2},
    fonttitle=\normalsize,
    breakable,
    #1
}
\newtcblisting{promptboxpartviolet}[2][]{%
    listing options={
        breaklines=true,
        breakautoindent=false,
        breakindent=0ex,
        basicstyle=\ttfamily\scriptsize,
        aboveskip=1pt,
        belowskip=1pt,
        escapechar=|,
    },
    width=\textwidth,
    listing only,
    autoparskip,
    colback=blue!2!white,
    colbacklower=blue!2!white,
    colbacktitle=violet,
    coltitle=white,
    title={#2},
    fonttitle=\normalsize,
    breakable,
    #1
}

\title{DecompRL: Solving Harder Problems by Learning Modular Code Generation}

\author[1,2*]{Juliette Decugis}
\author[1,3*]{Fabian Gloeckle}
\author[2]{Francis Bach}
\author[1]{Taco Cohen}
\author[1]{Gabriel Synnaeve}

\affiliation[1]{FAIR at Meta}
\affiliation[2]{Inria, \'Ecole Normale Sup\'erieure}
\affiliation[3]{CERMICS, \'Ecole des Ponts ParisTech}

\contribution[*]{Equal contribution}

\abstract{How can Large Language Models (LLMs) solve problems they currently cannot? Repeated sampling scales test-time compute but GPU cost grows linearly with attempts, while reinforcement learning (RL) with verifiable rewards improves single-attempt accuracy at the expense of sample diversity. Both strategies ultimately fail when the base policy has near-zero probability of producing a correct solution: no amount of sampling or gradient signal can overcome a search space that is simply too large. We take a different approach: rather than sampling harder, we make the task easier by decomposing problems into smaller, independently solvable sub-functions whose implementations can be recombined. Since off-the-shelf models are not trained for this modular generation, we introduce DecompRL, an RL algorithm that explicitly learns to decompose and implement hierarchical code structures. Recombining $k$ implementations of $n$ modules yields up to $k^{n}$ candidate solutions, shifting the bottleneck from GPU inference to cheap CPU evaluation and cutting GPU token cost by $\sim$50$\times$. On LiveCodeBench and CodeContests (Qwen~2.5~7B, Code World Model~32B), DecompRL outperforms standard and diversity-optimized RL baselines beyond $10^5$ tokens per problem, solving problems that standard generation cannot reach.}

\date{\today}
\correspondence{Juliette Decugis at \email{jdecugis@meta.com}}

\begin{document}

\maketitle

\section{Introduction}

\input{introduction}

\section{Method}
\input{method}

\section{Experimental setup}
\input{experiments}

\section{Results}
\input{results_good}

\section{Analysis}
\input{ablations}

\section{Limitations and Discussion}
\input{limits}

\section{Related works}
\input{related_works}

\section{Conclusion}

\input{discussion}

\newpage
\bibliographystyle{assets/plainnat}
\bibliography{references}

\clearpage
\newpage
\beginappendix

\input{appendix}

\end{document}

%% file: math_commands.tex
\def\eqref#1{equation~\ref{#1}}

\def\1{\bm{1}}

\DeclareMathAlphabet{\mathsfit}{\encodingdefault}{\sfdefault}{m}{sl}
\SetMathAlphabet{\mathsfit}{bold}{\encodingdefault}{\sfdefault}{bx}{n}

\newcommand{\E}{\mathbb{E}}

\newcommand{\R}{\mathbb{R}}

\DeclareMathOperator{\Cov}{Cov}
\DeclareMathOperator{\Var}{Var}

\DeclareMathOperator{\Tr}{Tr}

%% file: introduction.tex
\input{figures/figure1}
Large language models (LLMs) continue to improve in automatically verifiable domains like competitive programming. Currently, the dominant strategy for state-of-the-art performance relies on repeated sampling: generating numerous potential solutions to pass a verifier: the pass@k regime \citep{chen2021evaluating}. However, the major limitation of this approach lies in its significant GPU computational costs for scaling inference. Attempts to mitigate this through reinforcement learning post-training have primarily focused on improving single-sample accuracy (pass@1) at the cost of pass@k in high sampling regimes for $k >> 100$. This reduction in diversity undermines its effectiveness when scaling inference
at test time, because performance gains grow logarithmically for linear computational costs (Appendix~\ref{app:gumbel}).

Our approach targets tasks where solutions are hierarchically decomposable into independently implementable modules and where correctness is automatically verifiable. This structural assumption holds broadly: competitive programming problems naturally decompose into helper functions; formal proofs factor into lemmas \citep{lample2022hypertree}; hardware modules compose into chip designs \citep{liu2024openllm}; and scientific codebases separate data processing, modeling, and evaluation \citep{haller2024peano}. Whenever verification is cheap (unit tests, proof checkers, simulators) but generation is expensive (LLM inference), modular recombination offers a path to scale search without proportionally scaling GPU cost.

We propose DecompRL: a reinforcement learning framework that fundamentally shifts the scaling bottleneck from expensive GPU inference to cheap CPU evaluation. Inspired by modular inference strategies \citep{zelikman2023parsel}, DecompRL trains models to decompose challenges into a hierarchy of independently solvable sub-functions that can be re-combined to form a complete solution. For example, for a sorting task, instead of generating a single monolithic solution, the model identifies $N$ sub-problems such as partitioning, merging, and base-case sorting. By generating $K$ implementations for each of the $N$ modules in parallel, we can recombine them into up to $K^N$ unique candidate solutions. This approach creates a combinatorial explosion of potential programs for a fixed inference budget of only $K \times N$ tokens, allowing us to scale search on cheap CPU clusters without increasing the GPU footprint.

By framing hierarchical modular inference as an RL objective, DecompRL prioritizes exploration (Section \ref{sec:max_eval}), maximizes the utility of recombinations (Section \ref{sec:recomb}) and successfully discovers sparse rewards (Section \ref{sec:sota}). We do not position DecompRL as a replacement for standard RL post-training, which remains more sample-efficient when the starting policy is already strong. Rather, DecompRL is an exploration tool: it discovers correct solutions to problems that standard high-pass@$k$ sampling cannot reach, making it a natural complement for policy distillation pipelines and for overcoming the cold-start problem on hard tasks where the base policy has near-zero solve rates.

\input{figures/old_fig1}

%% file: figures/figure1.tex
\begin{figure}[!h]
    \centering
    \begin{minipage}[l]{\textwidth}
        \centering
        \includegraphics[width=\linewidth]{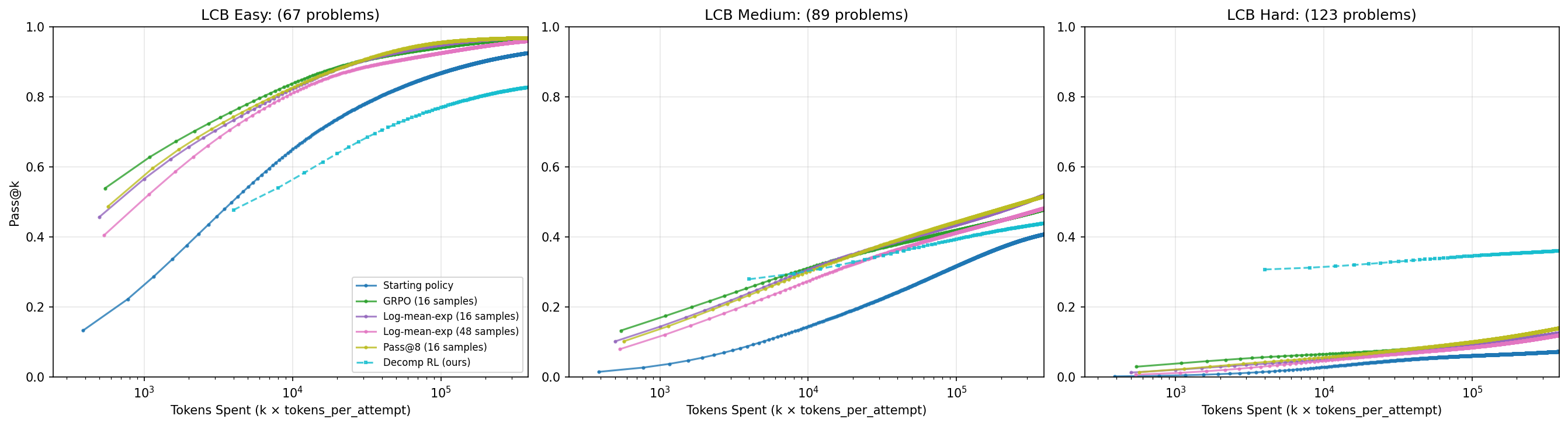}
    \end{minipage}%
    \caption{\textbf{Decomp RL solves harder problems than existing RL methods.} Our hierarchical training learns 1) a decomposition policy that breaks down problems into simpler functions and 2) an implementation policy that writes code for each function in parallel. Given a high token budget, Decomp RL solves up to $35 \%$ problems of the LiveCodeBench hard subset ($2024/08/01$ to $2025/02/01$) \cite{jain2024livecodebench}.}
    \label{fig:example}
\end{figure}

%% file: figures/old_fig1.tex
\begin{figure}[!h]
    \centering
    \includegraphics[width=0.7\linewidth]{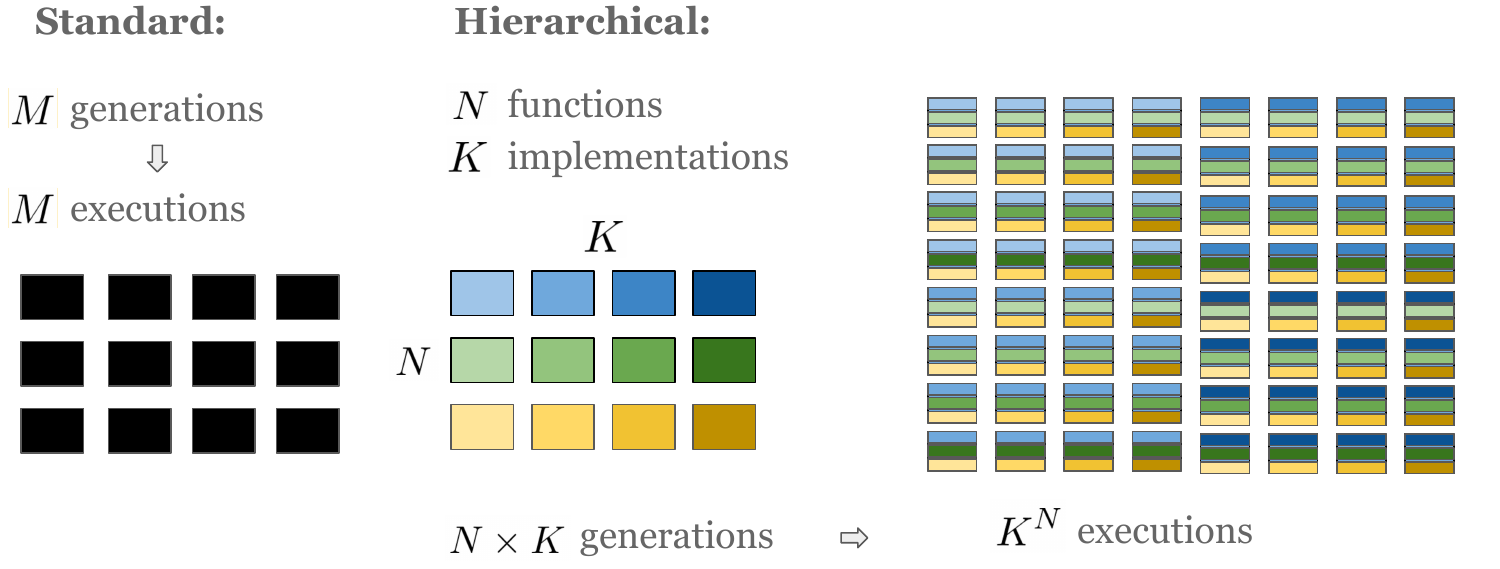}
    \caption{\textbf{Learning to scale CPU budget.} \textbf{(left)} Example problem from LiveCodeBench and the generated solutions with regular and hierarchical inference. \textbf{(right)} Solving a problem directly in $M$ tries leads to $M$ complete code solutions to evaluate. Hierarchical inference instead generates $N$ functions each implemented $K$ times independently. Recombining these partial code blocks gives up to $K^N$ full code solutions to evaluate.}
    \label{fig:old_example}
\end{figure}

%% file: method.tex
\label{sec:method}
We first introduce hierarchical generation in the context of problem solving (Section~\ref{sec:hierarchical}, \ref{sec:combination}) as described in Parsel \citep{zelikman2023parsel} and then show how we adapt this to train decomposition and implementation policies (Section~\ref{sec:rewards}).

\subsection{Hierarchical inference}
\label{sec:hierarchical}
We consider the task of generating code in a modular fashion, where model users have tighter control over the generation process than in the standard setup of whole-code generation from an autoregressive language model. Standard whole-code generation models the joint distribution of a piece of code of multiple parts $I_1, \ldots, I_n$ by means of its autoregressive decomposition and can optionally include a plan $D$ sampled before $I_1$. Suppressing conditioning on the problem $x$, we have:
\[
\pi(I_1, \ldots, I_n) = \pi(I_1 |D)\prod_{i=1}^n \pi(I_i \mid I_{i-1}, \ldots, I_1, D).
\]

In this work, we instead consider a hierarchical decomposition:
\[
\pi(D, I_1, \ldots, I_n) = \pi(D) \prod_{i=1}^n \pi(I_i \mid D)
\]
which models the implementations of the pieces $I_i$ as conditionally independent of each other given~$D$, and captures all interaction in the decomposition plan $D$.

In practice where we consider code generation tasks in Python, we represent decompositions $D$ by a set of functions specified by their signatures and natural language specifications in the form of docstrings. The parts $I_i$ are then implementations of those functions. We give an example of a decomposition in Figure~\ref{fig:example} and of a full hierarchical trajectory $(D, I_1, I_2)$ in the Appendix \ref{sec:example_decomp}. We use prompted and trained language models for the decomposition and implementation policies $\pi(D)$ and $\pi(I_i \mid D)$ (see Appendix~\ref{sec:prompts} for full prompts).

\subsection{Recombination}
\label{sec:combination}
Hierarchical inference comes with an important benefit: while the conditional independence assumption makes individual samples less precise, it can be exploited to generate samples cheaply by recombination. Here and in the following, let $D$ be a decomposition of size $n$ and $I_i^j, 1\leq i \leq n, 1 \leq j \leq k$, be $k$ implementations per function $I_i$ described by $D$. From these $nk$ implementation samples, we can create $k^n$ complete trajectories by considering all combinations $(D, I_1^{j_1}, \ldots, I_n^{j_n})$ for $1 \leq j_1, \ldots, j_n \leq k$. In other words, the number of complete trajectories scales polynomially in $k$ while the inference cost scales linearly.

This is particularly useful in domains where rewards are only observed for complete trajectories, which is the case for verifiable reasoning domains like mathematics and code. In such setups, verification is a cheap CPU operation while generation is an expensive GPU operation involving large language models. In this work, we consider the task of competitive programming where the model must produce a code solution to a problem in natural language. The reward is $1$ if the code passes all private tests (unknown to the model at inference time) and $0$ otherwise.

\subsection{Reinforcement learning: policy gradient estimators}
\label{sec:rewards}
As customary with large language models, we consider policy gradient reinforcement learning algorithms \citep{10.1007/BF00992696}.
We write $\tau = (D, I_1, \ldots, I_n)$ for a trajectory, $r(\tau)$ for the reward it receives and $\pi_{\theta}$ as the policy given by a large language model.
According to the policy gradient theorem \citep{sutton98book}, an unbiased estimate of the gradient $g$ of the expected reward is given by
\[
g = \nabla_\theta \mathbb{E}_{\tau \sim \pi_\theta}[r(\tau)]
= \mathbb{E}_{\tau \sim \pi_\theta} \left[
\sum_{a \in \tau} (r(\tau) - b) \nabla_\theta \log \pi_\theta(a)
\right],
\]
for any baseline $b$ which does not depend on the action $a \in \{D, I_1, \ldots I_n\}$.

We can now view recombination as a way to produce Monte-Carlo estimates of the policy gradient for hierarchical inference more efficiently.
Given $k$ implementations $I_i^j$ of the functions $I_i$ in the decomposition $D$, the standard estimator for $g$ used in sequential reinforcement learning is
\[
\hat{g}_\text{standard} = \frac{1}{k} \sum_{j \leq k}
\sum_{a \in \{D, I_1^j, \ldots, I_n^j\}} (r(D, I_1^j, \ldots, I_n^j) - b) \nabla_\theta \log \pi_\theta(a),
\]
which is optimal in the case of linear (autoregressive) rollouts where $I_k$ depends on $I_j$ for $k > j$.

For hierarchical inference, however, we made the modeling assumption that implementations are conditionally independent and all interactions are captured by their dependence on the decomposition $D$. In this case, we can evaluate all combinations of implementations generating a matrix $(I^{j}_i)_{j,i} \; \forall i,j$ for all functions. We can obtain the improved estimator:
\[
\hat{g}_\text{hierarchical} = \frac{1}{k^n} \sum_{j_1, \ldots, j_n \leq k}
\sum_{a \in \{D, I_1^{j_1}, \ldots, I_n^{j_n}\}} (r(D, I_1^{j_1}, \ldots, I_n^{j_n}) - b) \nabla_\theta \log \pi_\theta(a),
\]
of which $\hat{g}_\text{standard}$ represents the diagonal terms $(I^{j}_i)_{i,i} \; \forall i$. Clearly, both estimators are unbiased, but we have $\Var (\hat{g}_\text{hierarchical}) \leq \Var (\hat{g}_\text{standard})$, as we show in Theorem~\ref{thm:var}.

\subsection{Reinforcement learning objectives}
\label{sec:aggregation}
We do not expect hierarchical inference to improve upon whole code generation when compared on a single-attempt (pass@1) basis. This is because hierarchical inference comes with a loss in precision (conditional independence instead of joint modeling of implementations) and with only one sample we do not gain anything from the recombination. Instead, we propose to focus on the task of efficient inference scaling, i.e, comparing pass@k \citep{chen2021evaluating} metrics for large values of $k >> 100$.

Given $n$ rollouts $\tau_j$ with rewards $r = (r(\tau_1), \ldots, r(\tau_n))$ and a multi-sample objective function $f(r)$, \citet{tang2025optimizing} optimize ${E}_{\tau_i \sim \pi_\theta} [f(r)]$. Here, $f$ is typically a function of a variable number of arguments, permutation-invariant and monotonic in each argument such as the maximum function. They show this can be achieved directly by using gradients for any baseline $b$ independent of $\tau_i$ such as the leave-one-out baseline $b = f(r_{-i}) := f(r_1, \ldots, \hat{r}_i, \ldots, r_n)$ which omits $r_i$, giving
\[
g = \mathbb{E}_{\tau_i \sim \pi_\theta} \left[
\sum_i \left(f(r) - f(r_{-i})\right) \nabla_\theta \log \pi_\theta(\tau_i)
\right].
\]

\input{figures/transform}

In the case of hierarchical inference where the solutions are recombined before evaluation, we have a large number of correlated rewards and optimize for large pass@$k$. Unlike the original paper \citep{tang2025optimizing}, we can't take $f$ as the maximum over all samples since it would saturate and lead to extremely sparse advantages: a positive advantage would only be observed if $\tau_i$ is the only solution among $k$ attempts that solves the task (see Figure \ref{fig:baselines} in the Appendix). To adapt multi-sample objective training to this use case, we propose to use soft objective functions that interpolate between the hard maximum (pass@k training) and the average (standard training). We opt for the $\lme_\beta$ function defined by
\begin{equation} \label{eq:lme}
    \lme_\beta(r_1, \ldots, r_n) = \beta \log \left( \frac{1}{n} \sum_{i=1}^{n} e^{r_i / \beta} \right).
\end{equation}

This function smoothly interpolates between the average ($\beta \to \infty$) and the maximum ($\beta \to 0$), and for a well-chosen $\beta$ closely approximates a log-uniform mixture of pass@$k$ objectives (see Figure~\ref{fig:transform}), balancing exploration and exploitation (see Appendix~\ref{app:lme_properties}).

Concretely, in the setup of hierarchical inference, we apply this form of reward aggregation as follows. We generate $d$ decompositions $D_i$ of size $n_i$, $1 \leq i \leq d$, respectively and $k$ implementations for each function in each $D_i$. For each decomposition, we evaluate $m \leq k^{\max{(n_1,..,n_d})}$ combinations. Write $r$ for all $dm$ rewards obtained in this way, $r_{ij}$ for a single reward, $i \leq d, j \leq m$ and $A(r_{ij})$ for the set of actions involved in $r_{ij}$.

DecompRL is a cooperative framework of two policies: decomposition $\pi_\theta(D)$ and implementation $\pi_\theta(I \mid D)$. For the decomposition policy, we use the policy gradient
\begin{equation} \label{eq:objective}
    g = \mathbb{E}_{r_{ij}} \left[
\sum_{i=1}^{n} \left(\lme_\beta(r) - \lme_\beta(\{ r_{i'j} \mid i' \neq i) \})\right) \nabla_\theta \log \pi_\theta(D_i)
\right].
\end{equation}

For the implementation policy, assume $d=1$ and write $r_j$ for $r_{1,j}$. Then we use the policy gradient
\[
g = \mathbb{E}_{r_j} \left[ \sum_{l \leq n, j_l \leq k} \left(\lme_\beta(r) - \lme_\beta(\{ r_j \mid I_l^{j_l} \notin A(r_j) \})\right) \nabla_\theta \log \pi_\theta(I_l^{j_l} \mid D)
\right].
\]

In other words, the advantage is computed from the multi-sample objective using all observed rewards and baselined by the multi-sample objective computed from all rewards that an action did not participate in.
Because, by this construction, the baseline is independent of the action, it does not bias the gradient estimate.
The leave-one-out baseline reduces the overall gradient variance (see Appendix~\ref{app:var_reduction} for the computation adapted to multi-sample objectives). Empirically, the reward tensor over recombinations is well-approximated by a rank-1 (Appendix~\ref{app:reward_tensor}), confirming that credit concentrates along individual function axes and that the leave-one-out baseline provides reliable advantage estimates from a sparse subset of $k^n$ combinations.

%% file: figures/transform.tex
\begin{figure}[t]
\centering
\includegraphics[width=\textwidth]{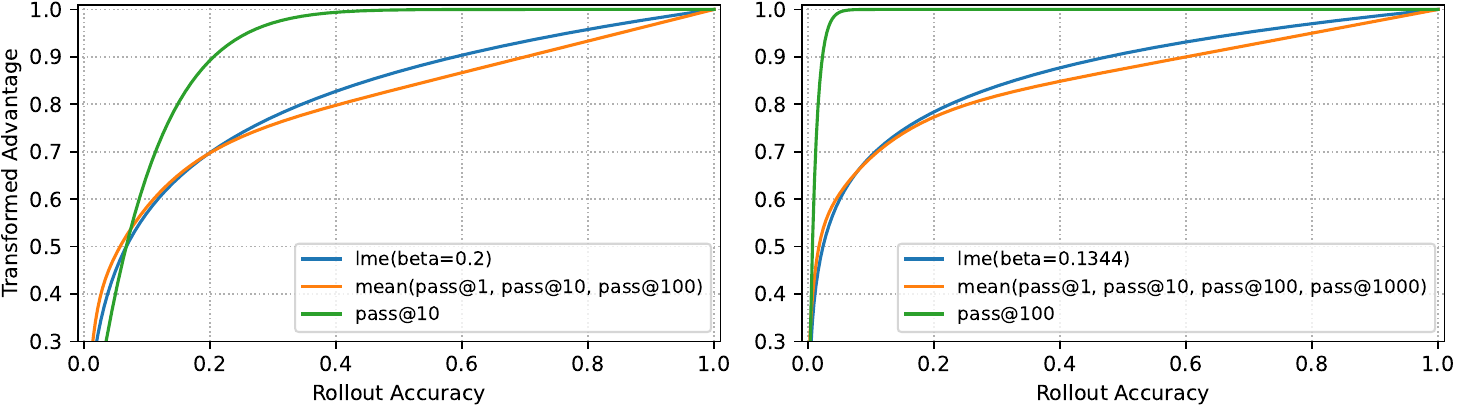} 
\caption{{Nonlinear reward transformations with utility functions.} Reward transformation induced by $\lme_\beta$ for different values of $\beta$, alongside the ones of pass@k: $u(p) = 1 - (1-p)^k$, and the mean of pass@k values. See Appendix~\ref{app:transform} for details.}
\label{fig:transform}
\end{figure}

%% file: experiments.tex
\label{sec:exp}

\subsection{Overview}
\textbf{Datasets and models: } We perform experiments with three different large language models: Qwen 2.5 7B \citep{qwen2025qwen25technicalreport}, Llama 3.1 8B Instruct \citep{meta2024llama} and the Code World Model 32B \citep{cwm}. We train all models with online reinforcement learning on a mix of $15,000$ competitive programming problems including the CodeContest \citep{li2022alphacode} and TACO \cite{li2023tacotopicsalgorithmiccode} training sets. We evaluate on the DeepMind Code Contest validation set \citep{li2022alphacode} (117 problems) and LiveCodeBench range $2024/08/01$ to $2025/02/01$ (279 problems) \citep{jain2024livecodebench}. All evaluations are done with temperature $1.0$ and top-p $1.0$ to promote answer diversity.

\textbf{Algorithms:} Similar to \citet{deepseek_grpo,deepseek_R1}, we optimize policy gradients using Proximal Policy Optimization (PPO) \citep{ppo} without a critic model using the advantage described in section \ref{sec:rewards}. We use an asynchronous distributed RL framework \citep{RLEF,rlhf_faster} where we divide $80$ H100s into a set of workers (producing samples) and trainers (updating the current policy). We use $72$ workers and $8$ trainers. All code generations are evaluated on an external CPU cluster.

\textbf{Baselines: } We compare with
Group Relative Policy Optimization (GRPO) and algorithms optimized for high pass@$k$ performance specifically: pass@$k$ training \citep{chen2025pass} (set $k = 8$), Soft Policy Optimization (SPO \cite{cohen2025soft}) and our own $\lme$ leave-one-out objective (see Equation \ref{eq:objective}) with $\beta = 0.3$. All methods use $16$ samples per prompt. We also include a $\lme$ objective with $48$ samples per prompt and $\beta = 0.1$ to mimic the number of forwards per decomposition present in DecompRL training (at most $6$ functions and $8$ implementations per problem). See hyperparameter and baseline tuning details in Appendix~\ref{sec:baselines} and \ref{app:hyper}.

\subsection{DecompRL as Multi-Agent Reinforcement Learning}
\label{sec:multi-rl}
DecompRL is a form of cooperative multi-agent RL where a decomposition and an implementation policy maximize the number of correct code combinations. Multi-agent reinforcement learning poses challenges not present in single-agent reinforcement learning. Agent policies are interdependent and non-stationary, which raises the questions of how to perform credit assignment after jointly observed rewards and how to optimize toward a social optimum (in the cooperative case) \citep{marl-book}. To address the moving target problem, we use sequential training and a counterfactual action value estimation \citep{foerster2024counterfactualmultiagentpolicygradients} with our leave-one-out baseline. Similar to Expectation-Maximization algorithms \cite{rosipal01}, our training has two stages. The decomposition and implementation policies are represented by two different copies of our model. First, we train the decomposition policy for $30k$ steps keeping the implementation policy fixed. Second we train the implementation policy also for $30k$ steps keeping the decomposition fixed. For each problem, we use $8$ decomposition samples. For a decomposition with $n$ functions ($n_{max} = 6$), we implement each $8$ times yielding $8^n$ code combinations. We randomly sample $512$ to evaluate out of $8^n$ to balance exploration between breadth-first (exploring new problems) and depth-first search (scaling evaluations per problem). Although we have two copies of the model, only one is trained at a time so this has similar cost as the reference policy used in traditional RL algorithms.

%% file: results_good.tex
DecompRL produces large numbers of diverse samples that lead to diverse solutions (Section~\ref{sec:max_eval}), learns modularity (Section~\ref{sec:recomb}) and solves new problems (Section~\ref{sec:sota})

\begin{table}[ht]
\centering
\caption{\textbf{DecompRL makes the most of high token compute.} We compare pass@tokens: given a set token budgets what is the solve rate across different online RL training methods on LiveCodeBench using Qwen 2.5 7B.}
\begin{tabular}{@{}lcccccc@{}}
\toprule
Token budget & GRPO & instruct & lme16 & lme48 & pass@8 & DecompRL (ours) \\ \midrule
1,000 tokens   & 0.18 & 0.06 & 0.19 & 0.12 & 0.15 & 0.18 \\
5,000 tokens   & 0.29 & 0.16 & 0.28 & 0.26 & 0.27 & 0.18 \\
10,000 tokens  & 0.32 & 0.21 & 0.31 & 0.30 & 0.31 & 0.25 \\
50,000 tokens  & 0.38 & 0.30 & 0.38 & 0.36 & 0.39 & \textbf{0.40} \\
100,000 tokens & 0.40 & 0.33 & 0.40 & 0.39 & 0.41 & \textbf{0.44} \\
500,000 tokens & 0.44 & 0.39 & 0.46 & 0.44 & 0.46 & \textbf{0.48} \\ \bottomrule
\label{tab: lcb_qwen}
\end{tabular}
\end{table}

\subsection{DecompRL leads to diverse solutions}
\label{sec:max_eval}
\input{figures/maxeval}
A decomposition with $n$ functions and $k$ implementations per function costs $nk+1$ model forwards and yields $k^n$ (correlated) code combinations for evaluation.
For hierarchical inference to make a difference, we need the final solutions to have enough diversity to offset their generation cost compared to $nk+1$ independent whole-code samples. With the Llama 3.1 8B and Qwen 2.5 7B, we show training the decomposition policy increases success rates per decomposition and number of evaluations.

In Figure~\ref{fig:max_eval}, we investigate the diversity of recombined programs by fixing a maximum number $m$ of code evaluations per decomposition and observing the relationship between $m$ and the overall success rate of a DecompRL model.
It can be seen that the success rate from less than 50 model forwards per problem shows no sign of plateauing even when scaling $m$ up to 1000 evaluations. Crucially, this remains true throughout reinforcement learning training as shown in Figure \ref{fig:exploration} where the solve rate continues to increase given a fixed evaluation budget of $m = 512$ code samples.

\subsection{Recombination matters}
\label{sec:recomb}
\input{figures/scatter}
The benefit of recombining partial solutions is most pronounced when the decomposition size $n$ is large where $n$ is the number of functions picked by the model.
In our experiments, we see that models with similar success rates using 10 decompositions behave very differently when inference is scaled to 100 decompositions. As we scale the inference budget per problem, models that create larger decompositions improve faster than those with smaller ones. The success rate with 10 decomposition attempts does not predict the success rate with 100 attempts, but the size of the decomposition does (see Figures~\ref{fig:scatter_passrate} and \ref{fig:scatter_size}).

\subsection{Solving new problems}
\label{sec:sota}

DecompRL through recombination and diverse evaluations solves new problems that are harder (Figure \ref{fig:example}) and only solvable at high inference budgets (Figure \ref{fig:scaling} and Table \ref{tab: lcb_qwen}). Using different models with DecompRL training, we beat the best existing methods for reinforcement learning when using high inference budgets (up to 3 million tokens per problem). We evaluate $m = 4096$ combinations per decomposition, which is roughly 80 times more than $nk+1$ language model calls would have produced with standard whole-code generation. On the other hand, $m = 4096$ is small compared to the theoretical maximum $k^{n_{\mathrm{max}}} = 8^6$  of combinations that hierarchical inference can produce.
Additional CPU execution resources could further increase the appeal of the method according to the scaling behavior in Figure~\ref{fig:scaling}. DecompRL can help overcome the limits of our SFT checkpoint and find solutions not available with regular inference. We see it as a useful way to gather offline data to improve the starting policy.

%% file: figures/maxeval.tex
\begin{figure}[t]
\centering
\begin{subfigure}[b]{0.31\textwidth}
    \centering
    \includegraphics[width=\textwidth]{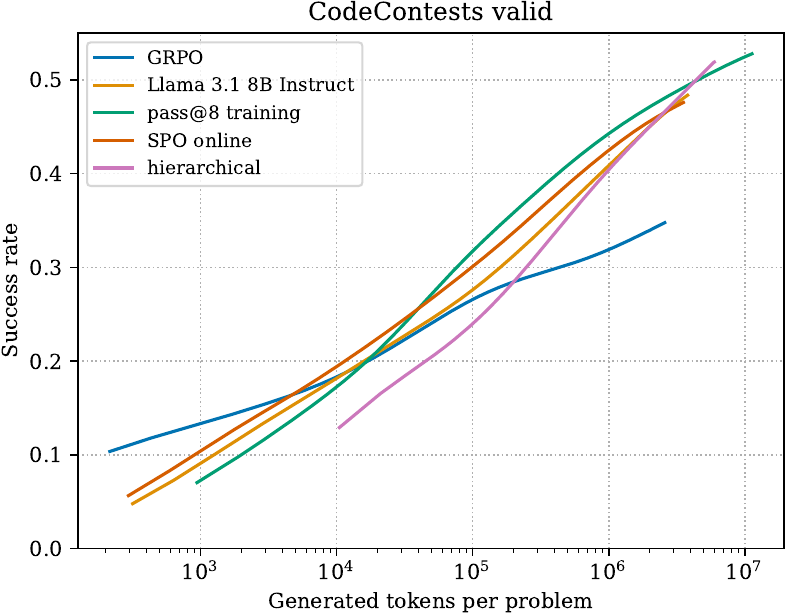}
    \caption{}
    \label{fig:dmc_valid}
\end{subfigure}
\hfill
\begin{subfigure}[b]{0.3\textwidth}
    \centering
    \includegraphics[width=\textwidth]{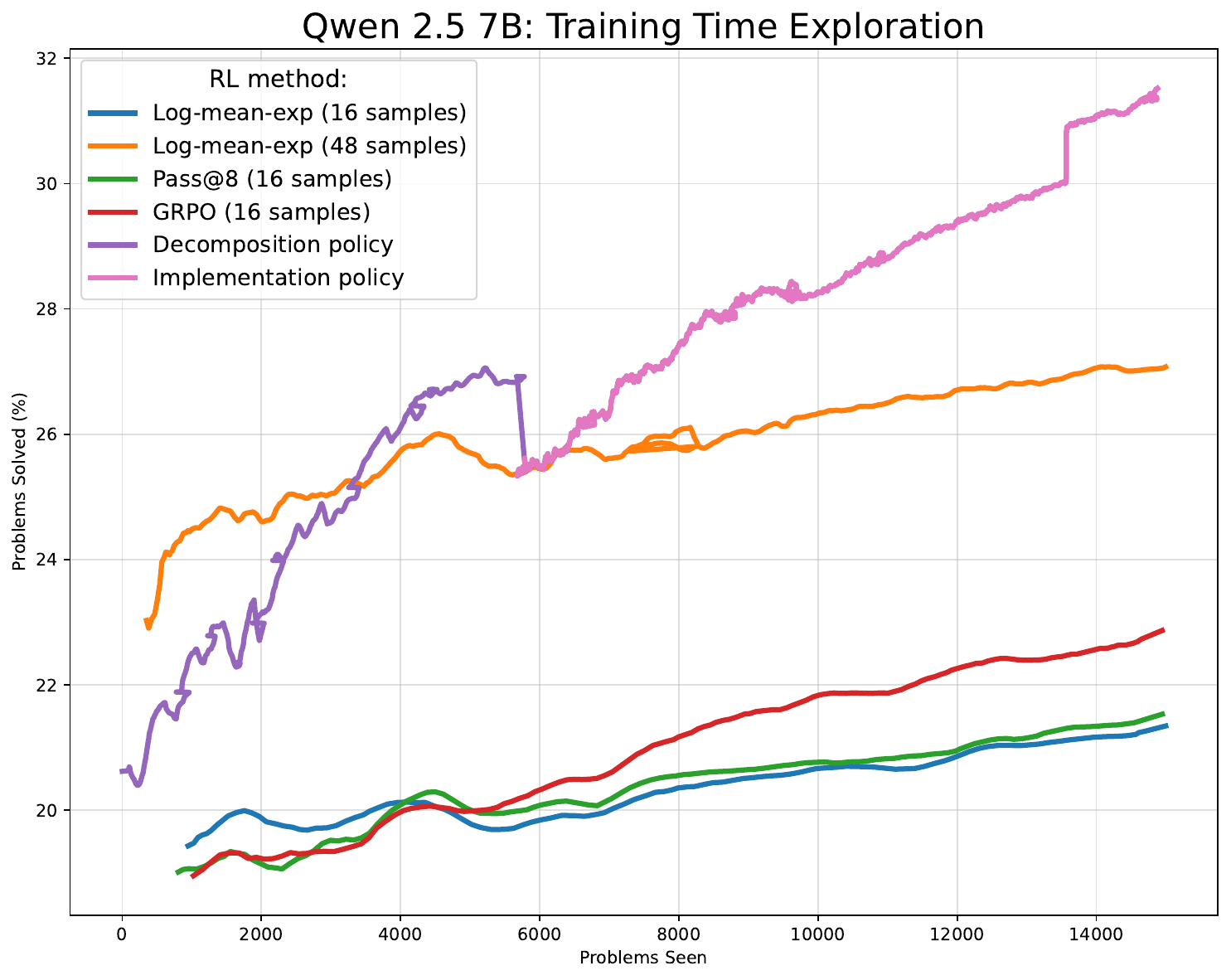}
    \caption{}
    \label{fig:exploration}
\end{subfigure}
\begin{subfigure}[b]{0.325\textwidth}
    \centering
    \includegraphics[width=\textwidth]{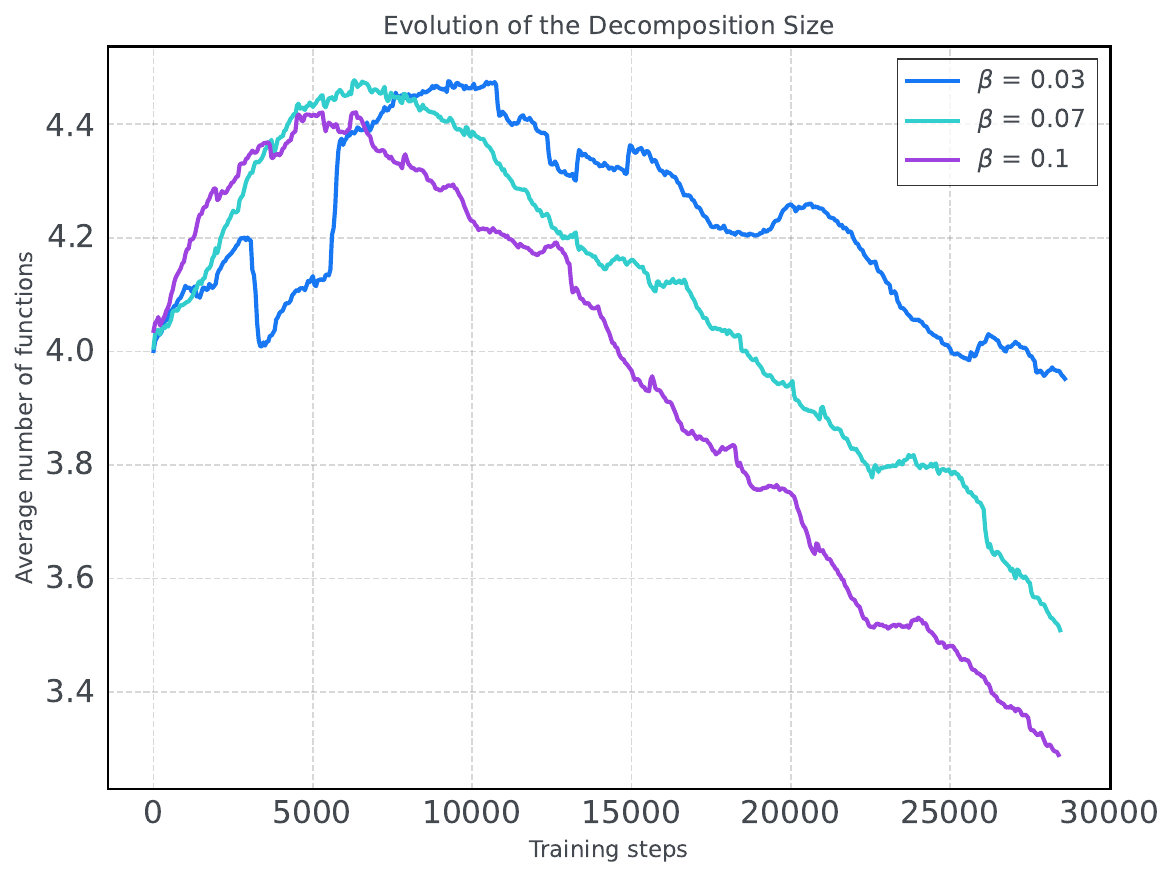}
    \caption{}
    \label{fig:decomp_decay}
\end{subfigure}
\caption{\textbf{Diversity from recombined samples.} \textbf{(a)} Scaling inference-compute on CodeContests valid: Standard reinforcement learning for code (GRPO) increases pass@1 while degrading pass@k for large $k$. Our method of hierarchical inference is competitive with scaling inference on the instruct model as well as with diversity-focused  reinforcement learning methods pass@8-training and SPO when evaluating only up to 4096 combinations per decomposition.
\textbf{(b)} Training the implementation and decomposition policies both help increase the solve rate per problem. \textbf{(c)} The decomposition policy learns to use less functions during training. We apply a $\lme$ normalization where the $\beta$ can be tuned between mean ($\beta \to \infty$) and max ($\beta \to 0$).}
\label{fig:scaling}
\end{figure}

%% file: figures/scatter.tex
\begin{figure}[t]
    \centering
    \begin{subfigure}[b]{0.3\textwidth}
        \includegraphics[width=\textwidth]{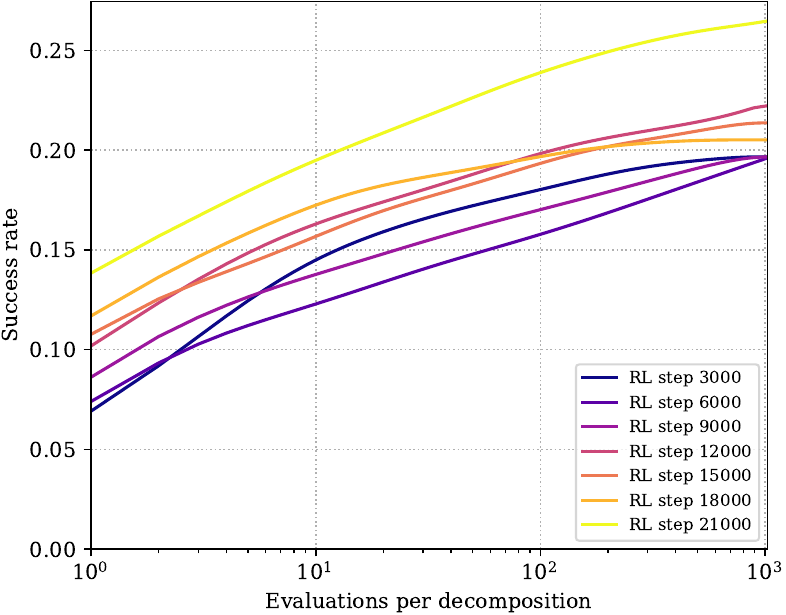}
        \caption{}
        \label{fig:max_eval}
    \end{subfigure}
    \hfill
    \begin{subfigure}[b]{0.3\textwidth}
        \centering
        \includegraphics[width=\textwidth]{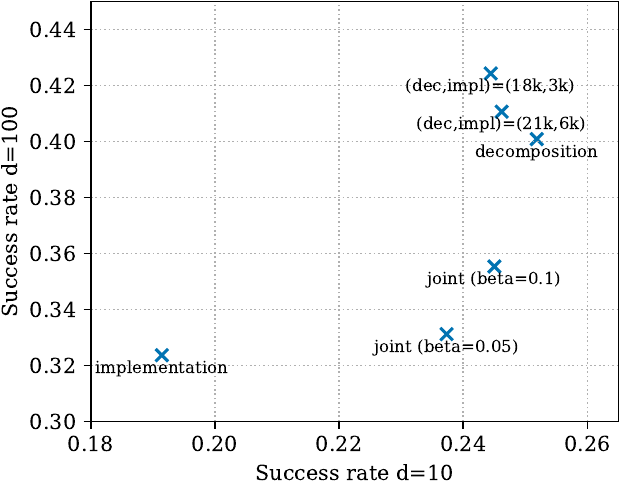}
        \caption{}
        \label{fig:scatter_passrate}
        
    \end{subfigure}
    \hfill
    \begin{subfigure}[b]{0.3\textwidth}
        \centering
        \includegraphics[width=\textwidth]{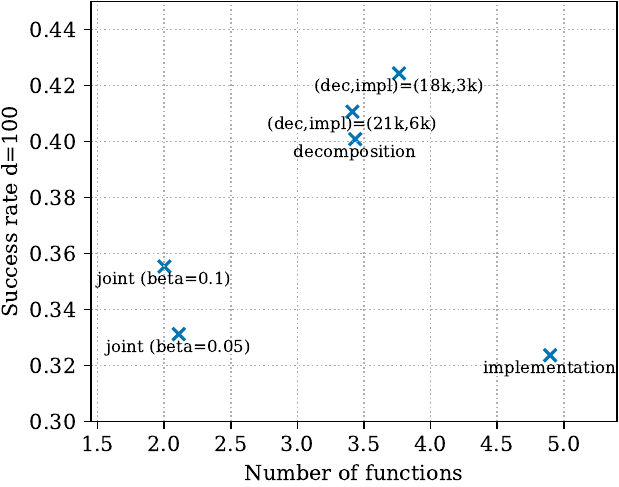}
        \caption{}
        \label{fig:scatter_size}
    \end{subfigure}
    \caption{\textbf{Recombination is critical for benchmark performance.}
A series of analysis done with LLama 3.1 8B Instruct.
\textbf{(a)} Training a decomposition and implementation policy helps maximize success rate per re-combined code evaluation despite a fixed inference budget. 
\textbf{(b)} Let $d$ the number of decompositions per problem, success rate at $d=10$ attempts does not predict success rate at $d=100$ attempts,
\textbf{(c)} whereas the size of decompositions does. Best performing models for large attempt numbers produce bigger decompositions on average.
}
    \label{fig:scatter}
\end{figure}

%% file: ablations.tex
\subsection{Ablation Study on DecompRL}
In standard RL with LLMs, one action leads to one reward and with GRPO we compute the advantage over a group $G$ of rewards where $G << 100$ due to inference limitations. In DecompRL, we instead have two policies performing $k \times n$ actions which lead to $k^n$ rewards. The advantage of each action is therefore estimated by many indirect and correlated rewards. In order to prevent over-estimating certain actions, we introduce in equation \ref{eq:objective}: 1) a leave one out instead of mean baseline, 2) transforming the rewards per action using a logmeanexp function (Equation \ref{eq:lme}) 3) training sequentially a set of two policies. Figure \ref{fig:ablations} shows how each component prevents instabilities. We run ablations on the Code World Model 32B.

\begin{figure}[t]
\centering
    \includegraphics[width=\textwidth]{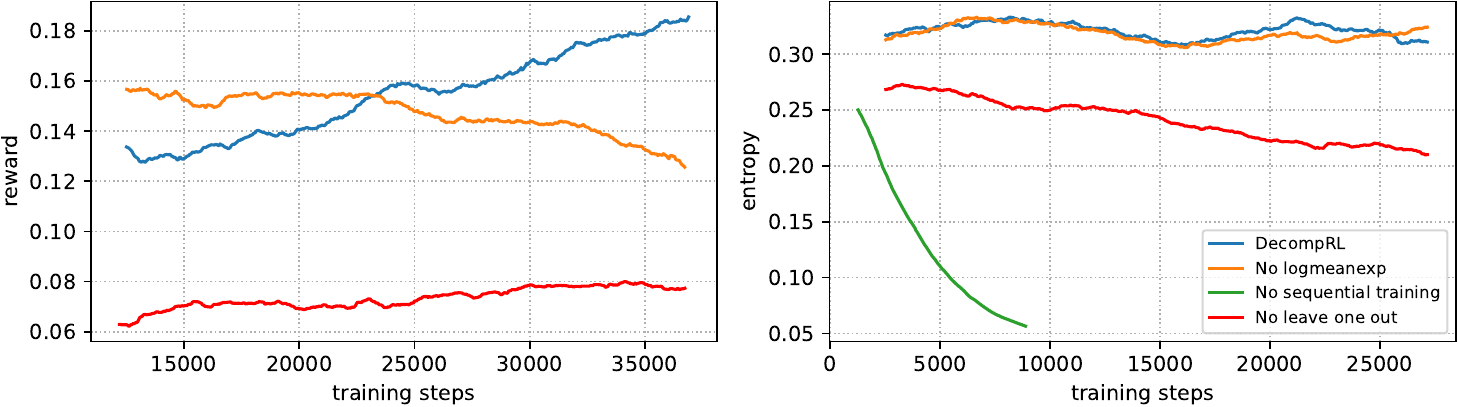}
\caption{\textbf{Ablation study on the DecompRL objective,} removing logmeanexp for a mean aggregation over rewards using a softmax, removing sequential training for joint training of both policies, and the leave one out baseline for value estimation in the advantage all lead to entropy or reward collapse.}
\label{fig:ablations}
\end{figure}

\textbf{Ablation on logmeanexp.} We compare our max interpolation (logmeanexp, $\beta = 0.03$) with a mean interpolation (softmax, $\beta = 0.5$). As we have many noisy rewards optimizing towards the mean eventually leads to reward collapse. Lower $\beta$ in $\lme$ also leads to faster decomposition size decay (see Figure \ref{fig:decomp_decay}).

\textbf{Ablation on sequential training.} Instead of alternating between training the implementation and decomposition policies, we study two other strategies: training only the implementation policy, and jointly training both policies (meaning no frozen model). Without clear credit assignment and with an over representation of implementation samples, we overfit the implementation policy and underfit the decomposition policy. Joint training leads to entropy collapse (Figure \ref{fig:ablations}) whereas sequential training allows us to keep scaling the number of functions with increased gains in performance (Figure \ref{fig:scatter}).

\textbf{Ablation on leave one out baseline.} Removing the leave one out baseline in the advantage calculation for both policies leads to much lower rewards and earlier plateau.

\subsection{Computational cost: GPU to CPU bottleneck shift}
\label{sec:cost}
A key advantage of DecompRL is that it shifts the dominant cost from GPU inference to CPU evaluation. Table~\ref{tab:wallclock} reports wall-clock breakdowns for Qwen~2.5~7B on our training setup (see details in Appendix~\ref{app:hyper}).

\begin{table}[ht]
\centering
\caption{\textbf{Wall-clock cost breakdown per training step} for Qwen~2.5~7B. DecompRL reduces GPU token generation by $\sim$50$\times$ at 512 evaluations, shifting the bottleneck to CPU-bound code execution. These numbers are per prompt so for a full sample size.}
\label{tab:wallclock}
\begin{tabular}{@{}lc cc cc c cc@{}}
\toprule
& & \multicolumn{2}{c}{Generation} & \multicolumn{2}{c}{Execution} & Training & \multicolumn{2}{c}{Total} \\
\cmidrule(lr){3-4} \cmidrule(lr){5-6} \cmidrule(lr){7-7} \cmidrule(lr){8-9}
Method & Samples & Tokens & Time (s) & Calls & Time (s) & Time (s) & Time (s) & GPU / CPU \\ \midrule
Standard & 16 & 10k & 108 & 16 & 371 & 1.1 & 479 & 23\% / 77\% \\
Standard & 512 & 198k & 2100 & 512 & 11878 & 21.8 & 14000 & 15\% / 85\% \\
DecompRL & 512 & 4k & 42 & 512 & 11878 & 0.4 & 11900 & 0.4\% / 99.6\% \\
\bottomrule
\end{tabular}
\end{table}

At 512 evaluations per problem, DecompRL generates only $\sim$4k tokens (one decomposition plus $n \times 8$ short function implementations) versus $\sim$198k for standard sampling, a $\sim$50$\times$ reduction in GPU cost. The evaluation cost is identical in both cases since both evaluate the same number of candidate solutions. This makes the total wall-clock time dominated by CPU execution rather than GPU generation.

When parallelized, the wall-clock saving from DecompRL scales with how GPU-bound the system is: at 8 GPUs with 128 remote execution threads, where generation dominates, DecompRL delivers a 3.6$\times$ speedup; at 8 GPUs with 64 threads, a 2.3$\times$ speedup; while at 72 GPUs with 128 threads, where remote execution already dominates, the saving shrinks to 1.3$\times$. Since our cluster is heavily GPU-optimized, we argue scaling the number of CPUs (which is often cheaper than scaling GPUs) could unlock even larger gains for DecompRL since the marginal cost of evaluating more recombinations is essentially free on the GPU side.

%% file: limits.tex
\label{sec:limits}
\paragraph{Format tax.} Hierarchical inference without RL training underperforms standard inference, with pass@1 dropping by 3 percentage points for Qwen~2.5~7B and performing similarly below the standard baseline for CWM~32B (Table~\ref{tab:hierarchical_baselines}). DecompRL with Llama 3.1 8B Instruct model degrades performance from $26.6 \%$ to $14.5 \%$ on the CodeContest validation with the same token budget (10 attempts with DecompRL, corresponding to pass@$316$ for standard inference). On the ``easy" split of LiveCodeBench, DecompRL performs worse than the starting policy using Qwen 2.5 7B (see Figure \ref{fig:example}). More generally, for easy problems and low token budgets, the format tax of decomposition outweighs the gains from additional recombined evaluations, since a strong base model can already solve these problems in a single monolithic attempt. Looking at the CodeContest training set as an example of competitive programming data, we notice decomposition and function generation are off-policy tasks for the model. Only 65~\% of the provided solutions in the CodeContests training set contain functions. We expect data in the training stage of the Llama 3.1 8B Instruct model to follow a similar distribution and potentially not encourage modular coding. The model has also been heavily fine-tuned through RLHF \citep{christiano2023deepreinforcementlearninghuman}, human annotations and synthetic data for the specific task of generating full code answers when prompted with problems in the standard inference setup. DecompRL unlike other reinforcement learning algorithms has to overcome this off-policyness during training. RL training closes this gap, with DecompRL surpassing even standard inference for both models (Table~\ref{tab:hierarchical_baselines}).

\paragraph{Size collapse.} Hierarchical inference comes with the cost of reduced precision at the single sample level. Decomposition training leads to a decreasing decomposition size (see Figure \ref{fig:decomp_decay}). This can allow for more relevant functions but in the limit if we have a single function, hierarchical inference becomes equivalent to classical whole-code generation. We see this as a form of reward hacking during training which can be delayed by using a lower $\beta$ in the $\lme$ objective and alternating between implementation and decomposition training. Simpler approaches such as bonuses based on the decomposition size led to reward hacking.

\paragraph{Sampling time.} During training, DecompRL scales the number of evaluations per problem to up to $512$ in our experiments compared to the default $16$. This allows for more exploration during training (see Figure \ref{fig:max_eval}) but can also lead to: slower worker GPUs and introduce off-policyness in online RL. We see DecompRL as a policy distillation method for gathering new solutions on the training set not available with regular RL. If we have a good enough starting policy using regular RL remains more efficient as a post training strategy.

%% file: related_works.tex
\paragraph{Scaling inference methods.}
Intermediate steps to help models generate correct code solutions has been studied at the ``plan'', ``algorithm'' and ``function'' level \citep{khot2022decomposed, self_discover, zhou2022least, jain2023llm, plansearch}. We adapt the decomposition and function approach from \citet{zelikman2023parsel}. To promote sampling diversity, optimizing over code generation trajectories has been studied mainly with repeated sampling \citep{li2022alphacode}, self-refinement chains with execution feedback \citep{shinn2023reflexion, madaan2023self}, and tree search using multiple children solutions per chain \cite{tree_search, ToT, light2025sfs}. Tree based generation systematically organizes and potentially directs linear generation, but generally does not create a combinatorial increase in solutions to evaluate (each evaluated leaf has to have been generated).

\paragraph{Recombination of partial solutions.}
Methods that decompose a problem into separate parts which can be solved independently and then recombined have been studied for solving mathematical and coding tasks. In mathematical theorem proving, tree search in formal theorem proving systems decomposes the problem into sub-goals that can be solved and checked independently by the verifier \citep{lample2022hypertree,polu2022formal}. In code generation and symbolic regression, library learning creates solutions to sub-problems for later reuse \citep{ellis2023dreamcoder,gauthier2023alien,grayeli2024symbolic}. These methods study re-combination at the inference level with a fixed policy for determining blocks whereas we propose reinforcement learning to improve the policy. Programs can be analyzed hierarchically \citep{shi2024code} or generated hierarchically \citep{zelikman2023parsel}. Via clustering in the embedding space (CodeChain \citep{le2023codechain}), prompting different agents MapCoder \citep{mapcoder} and iterating on trees of functions based on shared consensus (Divide and Conquer \citep{divide_conquer}), inference methods have enforced modularity in code generation trajectories.

\paragraph{Self-play reinforcement learning.} In formal mathematical proving, a \emph{conjecturer} model asserts statements for a prover to prove, either as a self-guided exploration task with intrinsic motivation \citep{poesia2024learning} or as a live augmentation technique during a standard reinforcement learning run for theorem proving \citep{dong2025stpselfplayllmtheorem}. In the domain of code, self-play involves a programming puzzle generator and a model that solves the generated tasks \citep{haluptzok2022language,teodorescu2023codeplay}. In DecompRL, the decomposition model can be viewed as a self-play teacher that is conditioned on the underlying programming problem to be solved.

\paragraph{Expert Iteration.} Expert iteration \citep{anthony2017thinking} divides the problem into an ``expert'' (a slow tree search algorithm doing the heavy exploration) and an ``apprentice'' (a fast neural network). As DecompRL is optimized for harder problems and high pass@$k$, we can view it as the ``expert'' policy gathering novel solutions to train a starting RL policy offline (the ``apprentice"). Online expert iteration methods such as STaR \citep{Star}, Quiet-STaR \citep{zelikman2024quiet}, ReST \citep{rest}, and ReST-MCTS \cite{zhang2024rest} use the same model as expert and apprentice to iteratively bootstrap its reasoning capabilities using successful trajectories. Similar to our hierarchical inference process, ReST-MCTS goes beyond regular inference using tree search to find solutions normally missed by a weak starting policy.

%% file: discussion.tex
Solving programming problems by decomposing them into simpler subproblems and recombining them into full solutions allows scaling inference-time evaluation budgets with a limited number of language model calls.
However, off-the-shelf instruction finetuned models do not fare well at this task.
We introduce DecompRL, a reinforcement learning technique that directly trains models for hierarchical inference, greatly enhancing their performance on this task.
This method encourages diversity by directly optimizing for a mixture of pass@k objectives with a novel $\lme$ advantage function.
The resulting models use recombination to solve problems unreachable with regular RL training. With evaluation up to pass@1k or $10^7$ generated tokens per problem, we show with different models (Llama 3.1 8B Instruct, Qwen 2.5 7B) that the success rate of DecompRL continues to scale whereas other methods begin saturating. With larger CPU resources, we envision the potential for significant advancements when scaling such systems to millions of evaluated answers. Moreover, the modularity of the resulting generation pipeline offers avenues for interpretability insights and an axis of control which is not available in other methods of scaling inference compute, such as in repeated sampling or scaled chain-of-thought.

A promising direction is to use DecompRL as the explorer stage of an explorer/distiller pipeline \citep{guo2025deepseek, rastogi2025learn}: DecompRL generates diverse correct solutions at high budgets, and a standard pass@1 policy is then distilled from these trajectories via supervised fine-tuning. This two-stage protocol could combine DecompRL's exploration strength with the efficiency of monolithic generation at deployment time. We also note that problem types that are naturally parallelizable or have a hierarchical structure such as multi-step proofs, nested algorithmic routines, or layered engineering systems are likely to benefit most from the modular decomposition that DecompRL encourages.

Looking forward, many scientific and engineering tasks are more easily evaluated than solved, presenting opportunities for innovative solutions. For writing entire codebases that solve specific problems, models could achieve acceptable pass rates at scale (e.g., pass@1M), even if their initial performance (pass@1) is poor. Our findings suggest that DecompRL is a high performing search policy to solve new and harder problems and could become an ingredient of any training pipeline where recombination is possible and evaluation is cheap while generation is expensive.

More generally, our findings shed light onto the hard exploration question in LLM post-training \citep{cui2025entropymechanismreinforcementlearning}: How can models discover new behaviors that allow solving previously unsolved tasks without the need for human annotations, and more efficiently than via finetuning on large-scale rejection sampled datasets \citep[e.g. in formal math][]{polu2022formal,lin2025goedelproverfrontiermodelopensource}? While DecompRL is specific to code generation, it hints at a possible future where exploration policies are trained specifically for creating high-quality data with behaviors that the final, exploiting policy trained with a standard pass@1 objective could not have found by itself over the course of its training.

%% file: appendix.tex
\section{Proofs}
\input{proof}
\label{sec:proof}

\section{Information-Theoretic View on Inference Scaling}
\label{app:gumbel}
Consider a dataset $D$ of problems $x \in D$, an evaluation function $v(x, y) \in \{0,1\}$, a probabilistic model $\pi$ and consider the pass rate or coverage function
\[
c(k) = \E_{x \sim D, y_1, \ldots, y_k \sim \pi(x)}[\max_i v(x, y_i)].
\]
Empirical investigations \citep{brown2024large,hughes2024bestofnjailbreaking,openai2023gpt4} frequently report power law scaling of the logarithmic pass rate with respect to the number of samples of the form
\[
\log(c) \approx -ak^{-b},
\]
where $a, b > 0$.

To analyze this relationship, note that $k$ can be seen as an expected waiting time in the Bernoulli process with repeated samples from $v(x, \pi(x)) \sim \mathrm{Bern}(p)$ where $p = \frac{1}{k}$, and we can consider the information quantity $h = -\log p = \log k$. Under the logarithmic power law model, we know the cumulative distribution function
\[
\Prob(k \leq k_0) = c(k_0) = \exp \left( -ak^{-b} \right)
\]
and thus for $h_0 = \log k_0$,
\[
\Prob(h \leq h_0) = \exp \left( -ae^{-bh} \right).
\]
Differentiating, we get the probability density function
\[
p(h) = ab \exp(-bh -ae^{-bh}) = \frac{1}{\beta} \exp \left( -\frac{h-\mu}{\beta} - e^{-\frac{h-\mu}{\beta}} \right),
\]
i.e. a Gumbel distribution with mode parameter $\mu = -\frac{\log a}{b}$ and scale parameter $\beta = \frac{1}{b}$.

Note that the Gumbel distribution arises as an extreme value distribution over exponentially distributed samples, and we can interpret this by imagining several ``latent dimensions of problem difficulty'' $h_i$, the largest of which determines the overall difficulty of a given problem: $h = \max_i h_i$.

\section{Properties of the Logmeanexp Objective}
\label{app:lme_properties}
The $\lme_\beta$ function (Equation~\ref{eq:lme}) has several theoretical appeals beyond serving as a smooth interpolation between the average ($\beta \to \infty$) and the maximum ($\beta \to 0$):
\begin{itemize}
    \item For a well-chosen value of $\beta$, it closely approximates the average between pass@1 + pass@10 + pass@100 + pass@1000. Such objectives strike a balance between favoring exploration and diversity (due to pass@1000 components) and re-ranking to favor successful solution attempts for exploiting the current knowledge (due to pass@1 components). See Appendix~\ref{app:transform} for details and Figure~\ref{fig:transform}.
    \item It represents a soft form of optimism over the other actions. Recall that Deep-Q-Learning \citep{mnih2013playing} uses $R_{t} + \max_a Q(s_{t+1}, a)$ as the Q-value target for $Q(s_t, a_t)$ if the action led to the new state $s_{t+1}$. This target can be seen as an approximation of $Q^\ast(s_t, a_t)$ for the $Q^\ast$ the state-action values of the optimal policy $Q^\ast$. In the language of soft reinforcement learning \citep{cohen2025soft}, $\mathcal{V}_\beta = \lme_\beta(r_1, \ldots, r_n)$ can be seen as the $\beta$-soft value function of selecting one out of $n$ rollouts with uniform prior $\pi_0$.
\end{itemize}

\section{Reward Aggregation as Utility Functions}
\input{figures/reweight}
\label{app:transform}
Consider as in Section~\ref{sec:aggregation} $n$ rewards $r_1, \ldots, r_n$. For zero-one rewards $r_i \in \{0, 1\}$, and different choices of a multi-sample objective function $f$, we write $c = \sum_i r_i$ and factor $f(r_1, \ldots, r_n) = u(p)$ via $p = \frac{c}{n}$. Note that $p$ is the mean estimator of $\E[r]$ if $r_i$ are samples from the distribution of $p$.

\paragraph{Logmeanexp.} We compute
\[
f(r_1, \ldots, r_n) = \beta \log \frac{1}{n} \sum_i e^{r_i/\beta}
= \beta \log \frac{1}{n} \left(ce^{1/\beta} + (n-c) \right)
= \beta \log \left(pe^{1/\beta} + 1-p \right).
\]
\paragraph{Pass@k with replacement.} In this case, $f(r_1, \ldots, r_n)$ is already defined by
\[
f(r_1, \ldots, r_n) = 1 - \left(1 - \frac{c}{n} \right)^k = 1 - (1-p)^k.
\]
\paragraph{Softmax-weighted average.} \citet{tang2025optimizing} suggest the multi-sample objective $\sum_i \frac{e^{r_i/\beta}}{\sum_j e^{r_j/\beta}} r_i$. In this case, we obtain
\[
f(r_1, \ldots, r_n)
= \sum_i \frac{e^{r_i/\beta}}{\sum_j e^{r_j/\beta}} r_i
=  \frac{ce^{1/\beta}}{c e^{1/\beta} + (n-c)}
=  \frac{pe^{1/\beta}}{p e^{1/\beta} + (1-p)}
= 1 - \frac{1-p}{pe^{1/\beta}+1-p}.
\]

Utility functions are a convenient method to understand the impact of a given choice of multi-sample objective function. We call a function $u: [0, 1] \to [0, 1]$ a utility function if it is monotonically increasing. We call a utility function explorative if it is concave. Examples of explorative utility functions have been derived above, and are depicted in Figure~\ref{fig:transform}.

For a single problem $x$, we have
\[
\E_{r_i \sim \pi(x)}[f(r_1, \ldots, r_n)] = u(\E_{r\sim \pi(x)}[r]),
\]
and thus the same holds when taking expectations over $x$. In other words, reinforcement learning with multi-sample objective functions changes the objective to a utility-transformed objective. On a single problem, by monotonicity of $u$, optimal policies for the original objective are optimal for the transformed objective and vice versa. When averaging over a dataset of problems, however, utility functions induce a different allocation of optimization budget. Concretely, let $dx$ be the distribution over problems and $p_x = \E_{r\sim\pi(x)}[r]$. Then
\[
\E_x[p_x] = \int_x p_x dx,
\]
and
\[
\E_x[u(p_x)] = \int_x u(p_x) dx = \int_x p_x \frac{u(p_x)}{p_x} dx,
\]
so $\frac{u(p_x)}{p_x} dx$ can be seen as the importance-reweighted distribution of problem difficulties according to $u$. Explorative utility functions are therefore the utility functions that induce a monotonically decreasing reweighting on problem difficulties, as depicted in Figure~\ref{fig:reweight}.

\section{Variance Reduction with Leave-One-Out Baselines}
\label{app:var_reduction}
Consider as in Appendix~\ref{app:transform} a multi-sample objective function $f$ of zero-one rewards $r_1, \ldots, r_n$ that factors via the mean $p = \frac{1}{n} \sum_i r_i$ via a utility function $u$:
\[
f(r_1, \ldots, r_n) = u(p(r_1, \ldots, r_n)).
\]

Writing $g_i = \nabla_\theta \log \pi_\theta(a_i)$, we are going to compare the default gradient estimator
\[
G_1
= \sum_i f(r_1, \ldots, r_n) g_i
= \sum_i u(p) g_i
\]
with the baselined one
\begin{align*}
G_2
&= \sum_i \left( f(r_1, \ldots, r_n) - f(r_1, \ldots, r_{i-1}, r_{i+1}, \ldots, r_n) \right) g_i \\
&= \sum_i \left(u(p) - u(p_{-i})\right) g_i,
\end{align*}
where
\[
p_{-i} = \frac{1}{n-1} \sum_{j \neq i} r_j.
\]

Because $\E[u(p_{-i}) g_i] = 0$, we have $\E[G_2] = \E[G_1] = \nabla_\theta \E[f(r_1, \ldots, r_n)]$ by the policy gradient theorem.

We will now compare the covariances of $G_1$ and $G_2$, respectively, assuming $u \in C^2(\R_+, \R_+)$ for convenience and setting $\mu = \E[r_1]$.
We have
\begin{equation} \label{eq:p-update}
p = p_{-i} + \frac{1}{n} (r_i - p_{-i})
\end{equation}
and
\[
u(p) = u(p_{-i}) + \frac{1}{n} u'(p_{-i}) (r_i-p_{-i}) + \mathcal{O}\left(\frac{1}{n^2}\right),
\]
hence
\[
\Cov G_2 \approx \Cov \left[ \sum_i \frac{1}{n} u'(p_{-i}) (r_i-p_{-i}) g_i \right]
\approx \Cov \left[ \frac{u'(\mu)}{n}  \sum_i  (r_i-\mu) g_i \right]
= \frac{u'(\mu)^2}{n} \Cov \left[ (r_1-\mu) g_1 \right]
\in \mathcal{O}\left(\frac{1}{n}\right)
\]
by independence of $(r_i-\mu)g_i$ and $(r_j-\mu)g_j$ for $i \neq j$.

For $G_1$, on the other hand, we obtain,
\[
\Cov G_1 = \Cov \left[ \sum_i u(p) g_i \right]
\approx \Cov \left[ \sum_i u(\mu) g_i \right]
= n u(\mu)^2 \Cov \left[ g_1 \right] \in \mathcal{O}(n)
\]

Since $\E[G_1] = \E[G_2] = \nabla_\theta \E[u(\mu)]$ is independent of $n$ in the limit, we obtain signal-to-noise ratios of
\[
\mathrm{SNR}(G_1) = \frac{\|\E[G_1]\|}{\Tr[\Cov G_1]} \in  \mathcal{O}\left(\frac{1}{n}\right)
\]
and
\[
\mathrm{SNR}(G_2) = \frac{\|\E[G_2]\|}{\Tr[\Cov G_2]} \in \mathcal{O}(n),
\]
making $G_1$ inconsistent.
The estimator $G_2$ with the leave-one-out baseline, on the other hand, is asymptotically deterministic.

Note that for simplicity, in the above, we considered a leave-one-out baseline that removes one out of $n$ terms from the aggregation.
In the case of DecompRL, we remove a fraction of $\frac{1}{d}$ out of $md$ terms in the case of decomposition actions and a fraction of $\frac{1}{k}$ terms out of $m$ terms in the case of implementation actions. Since Equation~\ref{eq:p-update} holds with $n$ replaced by $d$ or $k$, resp., and $r_i$ replaced by the average over the removed terms, the same asymptotic analysis holds.

\newpage
\section{Additional Experiments on Multi-Policy Training}
We perform experiments at the 8B scale using the LLama 3.1 8B Instruct model \cite{meta2024llama} to derive our multi-policy training recipe described in section \ref{sec:exp}. We train on the CodeContest training set and evaluate on its validation set. We give additional experiments regarding multi-policy training: benchmark results of combinations of separately trained policies (Figure~\ref{fig:grid}), results on sequential training (Figure~\ref{fig:continue}) and an overall comparison of multi-policy training methods over the course of reinforcement learning runs including alternate training (Figure~\ref{fig:at_10}). See Section~\ref{sec:multi-rl} for additional discussion.

Analyzing the policy entropies for decomposition and implementation in separate training runs, we find that entropy decreases substantially over the course of training, hinting at better certainty about good actions (exploitation) but also harming diversity at large sampling scales (exploration). The conditioning of a fixed implementation policy on an entropy-decreasing decomposition policy leaves the implementation policy entropy unaffected (Figure~\ref{fig:entropy}).

\input{figures/grids}
\input{figures/entropy}

\section{Baselines}
\label{sec:baselines}
We compare DecompRL with four reinforcement learning algorithms:
\begin{itemize}
    \item GRPO \citep{deepseek_grpo} - the state of the art for optimizing pass@$1$ performance.
    \item analytical pass@$k$ \citep{chen2025pass} - the state of the art for optimizing pass@$k$.
    \item Soft Policy Optimization (SPO) \citep{cohen2025soft} - optimized for diverse policies.
    \item our own $\lme$ utility function with regular inference which optimizes the mean between pass@1 and pass@k.
\end{itemize}

We train all baselines until the reward plateaus and tune the hyperparameters based on the best pass@10 performance on LiveCodeBench (see Figure \ref{fig:baselines}). We set $16$ samples per prompt for all baselines, $k = 8$ for pass@$k$ and $\beta = 0.3$ in the logmeanexp. Since DecompRL has a higher sampling budget per problem its reward plateaus after $1$ epoch whereas baselines plateau after $3$ epochs on the same training set.

\begin{figure}[t]
    \centering
    \centering
    \includegraphics[width=\textwidth]{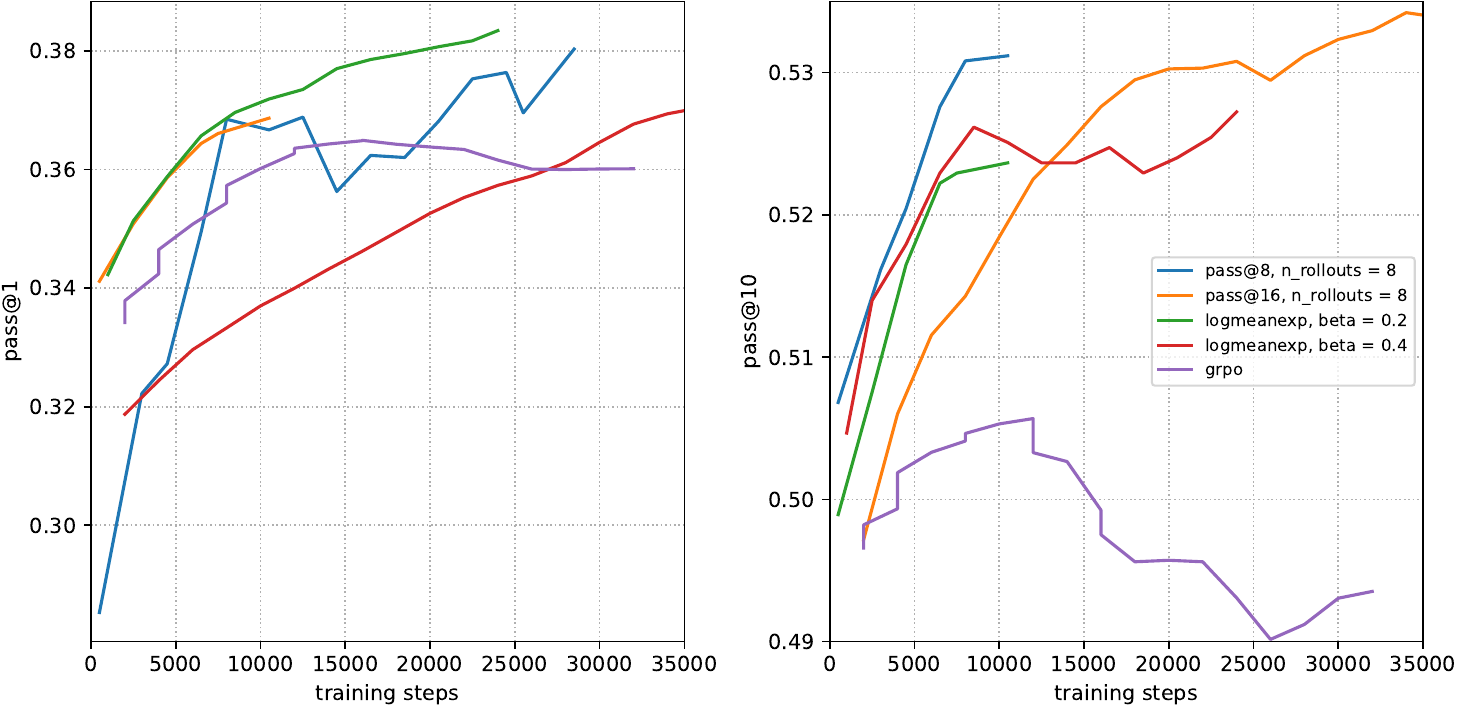}
    \caption{\textbf{Evaluation results during training for baseline methods} on the LiveCodeBench v5 split. Experiments ran with the Code World Model 32B \cite{cwm}. Sampling is performed with nucleus sampling with a probability mass of $0.95$ and temperature $0.6$ for all methods.}
    \label{fig:baselines}
\end{figure}

\section{Hyperparameters and Additional Training Details}
\label{app:hyper}
We train our models with the Adam optimizer \citep{kingma2014adam} ($\beta_1=0.9, \beta_2=0.95$) with decoupled weight decay \citep{loshchilov2019decoupled} factor $0.1$ and gradient clipping at a norm of 1. We warm up the learning rate linearly over 200 steps to a final value of $6 \times 10^{-8}$. We use a local batch size of 2. Trainers discard the first 10 batches at the beginning of a run as warmup.

Our distributed runs use 80 H100 GPUs, split up into 8 trainer GPUs and 72 worker GPUs. On the worker side, we use temperature $1.0$ for generations. During decomposition training, we set the number of implementations $k = 8$, a maximum decomposition size of $n_{\mathrm{max}} = 6$ and conduct up to $m = 1024$ evaluations per decomposition. We run three separate decomposition trainings with $\beta = 0.03, 0.07, 0.1$ with Qwen 2.5 7B which according to Figure~\ref{fig:transform} corresponds to a mixture of pass@k objectives for $1 \leq k \leq 1000$. We pick $\beta = 0.07$ as it had the highest pass@$10$ performance on the DeepMind Code Contest validation set. Each RL training run takes 2--3 days on 80 H100 GPUs for Qwen~2.5~7B. Each evaluation run (up to 1000 decompositions per problem with 4096 evaluations each) takes approximately 20 minutes with tensor parallelism TP=4 on 32 GPUs for Qwen~2.5~7B.

At evaluation time we use temperature of 1.0 and up to 1000 decompositions per problem with each up to $4096$ evaluations.

\paragraph{Memory cost.} DecompRL maintains two copies of the model (decomposition and implementation policies), but only one is trained at a time. For Qwen~2.5~7B, model weights occupy less than 10\% of GPU memory on workers ($\sim$7~GiB for two model copies vs.\ $\sim$3.5~GiB per GPU for a single model). Only worker GPUs bear the extra memory cost. We observed zero out-of-memory errors and required no CPU offloading, even with the Code World Model 32B. We argue the cost of storing two policies is similar to the one of using a reference model to compute a KL penalty in standard RL.

\paragraph{Code execution.} For all our code evaluations, we use a sandbox similar to the one provided for \citet{li2022alphacode} using Python 3.11. We set a time limit of $2T+1$ seconds for all test cases if $T$ is the number of test cases. We execute code on an external CPU cluster that parallelizes unit test executions. Each problem in our training set has a mean of $10$ unit tests, each with average size $<10^2$ bytes which we estimated to take $<0.1$ seconds to execute in our evaluation setup. If we have $512$ evaluations per problem during training, this gives an upper bound of $512 \times 0.1 = 51.2$ seconds per problem.

\section{Prompts}
\label{sec:prompts}
We use language models as decomposition and implementation policies.

For decompositions, we use the following decomposition prompt with a one-shot example, where the \texttt{\{context\}} is the dataset's problem description:
\begin{promptboxpartviolet}[]{Decomposition Prompt}
To solve complicated reasoning problems, it is often helpful to break them down into components.
Can you help a student of computer science solve a competitive programming problem by
breaking it down into sensible component functions?

## Example problem
An inversion in an array is a pair of elements where the first element is greater than the second element,
and the first element appears before the second element in the array.
For example, in the array [2, 4, 1, 3, 5], the inversions are (2, 1), (4, 1), and (4, 3). The total number of inversions is 3.

Task: Write a function that counts the number of inversions in an array efficiently.
The goal is to achieve this in O(n log n) time complexity.

## Example decomposition into component functions
```python
def merge_and_count(left: list[int], right: list[int]) -> tuple[list[int], int]:
    """
    Merges two sorted lists and counts the number of inversions.
    Args:
        left (list[int]): The left sorted subarray.
        right (list[int]): The right sorted subarray.
    Returns:
        tuple[list[int], int]: A tuple containing the merged sorted list and
        the count of inversions found during the merge.
    """
    pass

def merge_sort_and_count(arr: list[int]) -> tuple[list[int], int]:
    """
    Recursively sorts an array and counts the number of inversions.
    Args:
        arr (list[int]): The array to be sorted and analyzed for inversions.
    Returns:
        tuple[list[int], int]: A tuple containing the sorted array and the
        total count of inversions in the array.
    """
    pass

def count_inversions(arr: list[int]) -> int:
    """
    Counts the number of inversions in an array using a modified merge sort.
    Args:
        arr (list[int]): The array to be analyzed for inversions.
    Returns:
        int: The total number of inversions in the array.
    """
    pass
```

In this example decomposition of the programming problem, we give the student the idea
to use count the number of inversions by modifying the merge step of the merge-sort algorithm.
We explain clearly and concisely what the component functions are supposed to do by means of
their type signatures and their docstring, but leave the implementation to the student.

Now, let's do the same for the following competitive programming question:

## Student's competitive programming problem to decompose:
{context}

## Instructions
Please decompose the programming problem into component functions like in the example above.
Guidelines:
- Explain all the functions that the student will need to solve the problem without implementing them.
- Annotate each function with a type signature for its inputs and outputs.
- For each function, write a clear and concise docstring describing what the function does.
- Do not implement the functions: put `pass` as their body.
- The functions can call each other but they should "do one thing" only whereever possible.
    Their behavior should be fully described by their docstring.
- If shared state needs to be passed between the functions, describe it in one of their docstrings
    and make it an argument for the functions that need to read or modify it.
- Enclose the functions in a single code block using triple backticks like so: ```python YOUR CODE HERE ```.
- You can reason about the problem first before starting the code block.
\end{promptboxpartviolet}

After the decomposition step, we extract the function headers and documentation strings from the language model's generation. We implement each function in the decomposition using the prompt below. We have \texttt{\{decomposition\}} extracted from our first language model call, \texttt{\{current\_name\}} the function being implemented, and \texttt{\{other\_names\}} other function descriptions in the decomposition.

\begin{promptboxpartviolet}[]{Implementation Prompt}
I'm trying to solve the following code competitive programming problem:
## Problem description:
{context}

## Instructions
I have decomposed the problem into the following component functions:
```python
{decomposition}
```

Please help me implement the function {current_name}.
You can assume the following functions have been implemented correctly and use them without defining them in your code:
{other_names}

Guidelines:
- Enclose the code in triple backticks like so: ```python YOUR CODE HERE```.
- You can reason about the problem first before starting the code block.
- You can add import statements on top if necessary.

Now, please implement this
```python
{current_code}
```
by filling in the function body, keeping the signature and docstring.
\end{promptboxpartviolet}

\section{Reward Tensor Structure}
\label{app:reward_tensor}

We investigate the structure of reward tensors. For this, we generate solutions to CodeContests validation problems using hierarchical inference with $d=2$ decompositions of a maximal size of $6$ and $k=4$ implementations per function. We evaluate all resulting combinations, i.e. up to $4^6 = 4096$ per problem, and analyze the resulting reward tensors $r$ of mode (number of axes) $n$ and shape $k \times \ldots \times k$. Assume that each implementation is either correct or incorrect, at an individual level, and that combinations pass the tests if and only if all their constituting implementations are correct. In this case, correctness of implementations can be expressed as the maxima $m_i$ across all modes of $r$ except the $i$-th, and the reward tensor would equal the outer product $r^{(1)} = m_1 \otimes \ldots \otimes m_n$ of the $m_i$, which is a rank-1 approximation of $r$. More generally, the rank of a Boolean tensor using the Boolean operations ($\vee$ for addition, and $\wedge$ for multiplication) is known as the Boolean rank \citep{caen1981booleanrank}. In the case of $n=2$ it is also known as the bipartite dimension of the corresponding bipartite graph. It is NP-hard to compute \citep{garey2002computers}.

From 20 decompositions with at least one positive reward, we obtain the following statistics: In 17 cases, $r = r^{(1)}$ agrees with the rank-1 approximation. Overall, viewing $r^{(1)}$ as a prediction of $r$, there are 859 true positives, 483 true negatives, 34 false positives and no false negatives (the latter by definition).

\section{Number of Tokens Generated}
Figure \ref{fig:example}, \ref{fig:dmc_valid}, and Table \ref{tab: lcb_qwen} compare RL-trained models (SPO, GRPO, pass@8, logmeanexp) against our hierarchical inference method (DecompRL) for different token counts. We consider one attempt as one code generation per problem for standard inference and one full decomposition with 8 implementations, 4096 max evaluations for hierarchical inference. To ensure a fair comparison, we compare the pass rates per generated tokens which correspond to a different number of attempts for each RL algorithm. Table \ref{tab:tokens} summarizes the average cost of an attempt per method for the LLama 3.1 8B and Qwen 2.5 7B models.

\input{tables/tokens}

We compare success rates for a fixed budget of tokens similar to the pass@t from \citet{ouyang2022training}.

\section{Example DecompRL}
\label{sec:example_decomp}
We show below an example of a code problem extracted from LiveCodeBench and a generated decomposition with $6$ functions from a Qwen 2.5 7B model trained with DecompRL. This decomposition leads to correct code solutions.

\begin{promptboxpartviolet}[]{Problem Description}
You are given a positive number n.

Return the smallest number x greater than or equal to n, such that the binary representation of x contains only set bits

Example 1:
Input: n = 5 Output: 7
Explanation: The binary representation of 7 is \"111\".

Example 2:
Input: n = 10 Output: 15

Explanation: The binary representation of 15 is \"1111\".

Example 3:
Input: n = 3 Output: 3

Explanation: The binary representation of 3 is \"11\".

Constraints: 1 <= n <= 1000
\end{promptboxpartviolet}

\begin{promptboxpartviolet}{Generated Decomposition}
Certainly! Here's the decomposition of the problem into component functions, along with the type signatures and docstrings for each function:

def next_power_of_two(n: int) -> int:
    pass

def set_bits_until_power_of_two(n: int) -> int:
    pass

def smallest_number(n: int) -> int:
    pass

def count_set_bits(x: int) -> int:
    pass

def main(n: int) -> int:
    pass

def smallestNumber(n: int) -> int:
    power_of_two = next_power_of_two(n)
    set_bits = set_bits_until_power_of_two(power_of_two)
    x = (power_of_two - 1) or set_bits
    return x

In this decomposition:
next_power_of_two finds the smallest power of two greater than or equal to n

set_bits_until_power_of_two counts the number of set bits in n until it reaches the next power of two.

smallest_number uses these two functions to find the smallest number x with all set bits in its binary representation.

count_set_bits simply counts the number of set bits in a given number x.

main is a placeholder for any additional logic.

smallestNumber is the main function that users will call, which leverages the other functions to solve the problem.
\end{promptboxpartviolet}

\section{Hierarchical Inference Without RL}
\label{app:hierarchical_baselines}

\begin{table}[ht]
\centering
\caption{\textbf{Hierarchical inference without RL underperforms standard inference.} Pass@1 comparison at $\sim$4k tokens on LiveCodeBench. Hierarchical (no RL) uses prompted decomposition without any RL training. DecompRL's gains come from RL training, not from the hierarchical structure alone.}
\begin{tabular}{@{}lcc@{}}
\toprule
Model & Method & Pass@1 (\%) \\ \midrule
Qwen 2.5 7B & Standard & 14.0 \\
Qwen 2.5 7B & Hierarchical (no RL) & 11.0 \\
Qwen 2.5 7B & DecompRL & \textbf{18.0} \\ \midrule
CWM 32B & Hierarchical (no RL) & 11.9 \\
CWM 32B & DecompRL & \textbf{27.8} \\
\bottomrule
\label{tab:hierarchical_baselines}
\end{tabular}
\end{table}

\section{Licenses of Used Datasets}
\label{app:licenses}
We use datasets with the following licenses:
\begin{itemize}
    \item CodeContests \citep{li2022alphacode}: CC BY 4.0 with MIT and Apache 2.0 licenses components.
    \item TACO \citep{li2023tacotopicsalgorithmiccode}: Apache 2.0.
    \item LiveCodeBench \citep{jain2024livecodebench}: CC BY 4.0 with MIT and Apache 2.0 licenses components.
\end{itemize}

%% file: proof.tex
\subsection{Variance Reduction}

In this section, we give the proof for the theorem on variance reduction by evaluating recombinations. First, we give an accessible direct proof in the case of two components that exhibits the key idea, then we extend to an arbitrary amount of components.
\begin{theorem}
Let $n \geq 1$, $X_1, \ldots, X_n, Y_1, \ldots, Y_n$ be independent random variables such that all $X_i$ are identically distributed and all $Y_j$ are identically distributed. Let $f(X, Y)$ be a scalar function and assume $\E f(X_1, Y_1)^2 < \infty$. Then
\[
\Var \frac{1}{n^2} \sum_{i, j} f(X_i, Y_j)
\leq \Var \frac{1}{n} \sum_i f(X_i, Y_i)
= \frac{1}{n} \Var f(X_1, Y_1).
\]
\end{theorem}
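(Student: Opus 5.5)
We prove the statement with a Hoeffding (ANOVA) decomposition of $f$. The equality on the right is immediate: the pairs $(X_i,Y_i)$ are i.i.d. across $i$, so the variance of their average is $\frac{1}{n}\Var f(X_1,Y_1)$. The real content is the inequality.

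Let $X,Y$ be independent copies with the common laws, and set $\mu=\E f(X,Y)$. Define
\[
a(x)=\E f(x,Y)-\mu, \qquad b(y)=\E f(X,y)-\mu, \qquad c(x,y)=f(x,y)-\mu-a(x)-b(y).
\]
These are well defined in $L^2$ because $\E f^2<\infty$, via Fubini and Jensen. By construction $\E a(X)=\E b(Y)=0$. Moreover $\E[c(x,Y)]=0$ for every $x$ and $\E[c(X,y)]=0$ for every $y$. From these conditional-mean identities, the three terms $a(X)$, $b(Y)$, $c(X,Y)$ are pairwise uncorrelated: for instance $\E[a(X)c(X,Y)]=\E[a(X)\E[c(X,Y)\mid X]]=0$, and $\E[a(X)b(Y)]=0$ by independence. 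This gives
\[
\Var f(X,Y)=\sigma_a^2+\sigma_b^2+\sigma_c^2 .
\]

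Next, substitute the decomposition into the two estimators. For the standard estimator,
\[
\frac{1}{n}\sum_i f(X_i,Y_i)-\mu=\frac{1}{n}\sum_i a(X_i)+\frac{1}{n}\sum_i b(Y_i)+\frac{1}{n}\sum_i c(X_i,Y_i),
\]
which recovers the variance $(\sigma_a^2+\sigma_b^2+\sigma_c^2)/n$. For the recombined estimator,
\[
\frac{1}{n^2}\sum_{i,j} f(X_i,Y_j)-\mu=\frac{1}{n}\sum_i a(X_i)+\frac{1}{n}\sum_j b(Y_j)+\frac{1}{n^2}\sum_{i,j}c(X_i,Y_j).
\]
The three blocks are again mutually uncorrelated, by the same conditional-mean argument applied with independence across indices. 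The first two blocks contribute $\sigma_a^2/n+\sigma_b^2/n$.

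The main step is the interaction block. Expand $\E\big[c(X_i,Y_j)c(X_{i'},Y_{j'})\big]$. If $i\neq i'$, condition on everything except $X_i$; the term vanishes because $\E[c(X_i,y)]=0$ for each fixed $y$. The case $j\neq j'$ vanishes symmetrically. Only the $n^2$ diagonal terms with $i=i'$ and $j=j'$ survive, so
\[
\Var \frac{1}{n^2}\sum_{i,j}c(X_i,Y_j)=\frac{\sigma_c^2}{n^2}\leq\frac{\sigma_c^2}{n}.
\]
Summing the blocks gives the claimed inequality, and in fact the exact gap $\sigma_c^2(1/n-1/n^2)$.

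The only delicate point is the bookkeeping of independence when $i=i'$ but $j\neq j'$. Here the two factors share $X_i$, so one must condition on $X_i$ and $Y_j$ and integrate out $Y_{j'}$; this uses $\E[c(x,Y)]=0$ for every $x$. Integrability of all the cross products follows from Cauchy–Schwarz, since each term is in $L^2$. The same decomposition, written over all subsets of the $n$ components, extends the argument to the multi-component case.
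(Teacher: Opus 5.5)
Your proof is correct and follows the paper's route. Both use the Hoeffding decomposition $f=\mu+a+b+c$ into orthogonal pieces, and both get the exact variance $\frac{1}{n}(\sigma_a^2+\sigma_b^2)+\frac{1}{n^2}\sigma_c^2$ of the recombined estimator, since only index pairs sharing an $X$ or a $Y$ contribute. The difference is bookkeeping: the paper expands $\Cov(f(X_i,Y_j),f(X_k,Y_l))$ directly and identifies the shared-index covariances as $\Var f_X$ and $\Var f_Y$, whereas you push the decomposition into the estimator itself, which is how the paper handles its general $m$-component theorem.
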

\begin{proof}
The equality follows from independence of $(X_i, Y_i)$ and $(X_j, Y_j)$ for $i \neq j$. For the proof of the inequality, write $f_X = E[f(X, Y) \mid X]$, $f_Y = E[f(X, Y) \mid Y]$, $\mu = \E[f(X, Y)]$ and consider the Hoeffding decomposition
\[
\Var f(X, Y) = \Var f_X + \Var f_Y + \Var \left( f(X, Y) - f_X - f_Y + \mu \right),
\]
which is proved by expanding all terms and applying the law of total expectation.
In particular, by non-negativity of the variance, we have
\[
\Var f_X + \Var f_Y \leq \Var f(X, Y).
\]

To apply this result, write
\begin{align*}
& \Var \frac{1}{n^2} \sum_{i, j} f(X_i, Y_j) \\
=& \frac{1}{n^4} \sum_{i,j,k,l} \Cov(f(X_i, Y_j), f(X_k, Y_l)) \\
=& \frac{1}{n^4} \sum_{i,j,k,l,i=k \text{ or } j=l} \Cov(f(X_i, Y_j), f(X_k, Y_l)) \\
=& \frac{1}{n^4} \left( \sum_{i,j}\Var f(X_i, Y_j) + \sum_{i,j,l,j \neq l} \Cov(f(X_i, Y_j), f(X_i, Y_l)) + \sum_{i,j,k, k \neq i} \Cov(f(X_i, Y_j), f(X_k, Y_j)) \right) \\
=& \frac{1}{n^2} \Var f(X, Y) + \frac{n-1}{n^2} \Var f_X + \frac{n-1}{n^2} \Var f_Y \\
\leq& \frac{1}{n^2} \Var f(X, Y) + \frac{n-1}{n^2} \Var f(X, Y)  \\
=& \frac{1}{n} \Var f(X, Y).
\end{align*}
Here, we used the fact that $f(X_i, Y_j)$ and $f(X_k, Y_l)$ are independent if $i \neq k$ and $j \neq l$ and made sure not to count the pair of index pairs $(i,j), (i,j)$ twice.
\end{proof}

The generalization to $m$ variables is given in the following theorem. The proof proceeds in the same way but requires the general Hoeffding decomposition involving marginals over all subsets of variables \citep[Chapter 11.4]{Vaart_1998}.
\begin{theorem} \label{thm:var}
Let $n,m \geq 1$, $X^j_i$, $i \leq m$, $j \leq n$ be independent random variables such that $X^j_i, X^k_i$ are identically distributed for all $i,j,k$. Let $f(X_1, \ldots, X_m)$ be a scalar function and assume $\E f(X_1^1, \ldots, X_m^1)^2 < \infty$. Then
\[
\Var \frac{1}{n^m} \sum_{j_i} f(X_1^{j_1}, \ldots, X_m^{j_m})
\leq \Var \frac{1}{n} \sum_j f(X_1^j, \ldots, X_m^j)
= \frac{1}{n} \Var f(X_1^1, \ldots, X_m^1).
\]
\end{theorem}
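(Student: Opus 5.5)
The plan is to mimic the two-component proof, replacing the two-term Hoeffding decomposition by the general one over subsets of $\{1,\ldots,m\}$. Write $X=(X_1,\ldots,X_m)$ for a generic tuple with independent coordinates distributed as the $X_i^j$. For $S\subseteq\{1,\ldots,m\}$ define the conditional expectations $f_S = \E[f(X)\mid X_S]$. The Hoeffding (ANOVA) decomposition gives orthogonal components $h_S=\sum_{T\subseteq S}(-1)^{|S|-|T|}f_T$, with $f=\sum_S h_S$ and $\E[h_S h_{S'}]=0$ for $S\neq S'$. It also gives $\E[h_S\mid X_T]=0$ unless $S\subseteq T$. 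Consequently $\Var f_S=\sum_{\emptyset\neq T\subseteq S}\sigma_T^2$ with $\sigma_T^2=\Var h_T$, and $\Var f=\sum_{T\neq\emptyset}\sigma_T^2$. I take this from \citep[Chapter 11.4]{Vaart_1998} and verify it by the same expansion and total-expectation argument used for two variables.

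\textbf{Step 1: expand the variance of the recombined mean.} I write it as $n^{-2m}\sum_{\mathbf j,\mathbf k}\Cov(f(X^{\mathbf j}),f(X^{\mathbf k}))$, where $\mathbf j,\mathbf k\in\{1,\ldots,n\}^m$ and $X^{\mathbf j}=(X_1^{j_1},\ldots,X_m^{j_m})$. Let $S=\{i: j_i=k_i\}$ be the coordinates the two tuples share. Conditional on the shared variables $X_S$, the remaining coordinates are independent copies. Hence $\Cov(f(X^{\mathbf j}),f(X^{\mathbf k}))=\Var f_S$, with $f_\emptyset$ constant so its variance is $0$. This is the direct generalization of the observation in the two-component proof that $\Cov(f(X_i,Y_j),f(X_i,Y_l))=\Var f_X$. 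The number of pairs $(\mathbf j,\mathbf k)$ with agreement set exactly $S$ is $n^m (n-1)^{m-|S|}$. Therefore
\[
\Var \frac{1}{n^m}\sum_{\mathbf j} f(X^{\mathbf j}) = \sum_{S} \frac{(n-1)^{m-|S|}}{n^{m}}\Var f_S = \sum_{T\neq\emptyset}\sigma_T^2\sum_{S\supseteq T}\frac{(n-1)^{m-|S|}}{n^m}.
\]

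\textbf{Step 2: evaluate the inner sum and compare.} Put $s=|S|$ and $t=|T|$. The inner sum runs over supersets of $T$, so by the binomial theorem it equals $n^{-m}\bigl(1+(n-1)\bigr)^{m-t}=n^{-t}$. This yields the exact formula $\Var = \sum_{T\neq\emptyset} n^{-|T|}\sigma_T^2$. Since $|T|\ge 1$, each coefficient is at most $n^{-1}$, so the sum is at most $n^{-1}\sum_T\sigma_T^2=n^{-1}\Var f(X)$. This is the inequality. The equality with the diagonal estimator follows because the $n$ tuples $(X_1^j,\ldots,X_m^j)$ are i.i.d. across $j$. As a check, for $m=2$ the formula gives $\frac1n(\sigma_1^2+\sigma_2^2)+\frac1{n^2}\sigma_{12}^2$, which matches the two-component computation.

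\textbf{Main obstacle.} The delicate part is Step 1: making rigorous that the covariance depends only on the agreement set, and that it equals $\Var f_S$. Here I would condition on $X_S$ and use independence of the non-shared coordinates, so that $\E[f(X^{\mathbf j})f(X^{\mathbf k})\mid X_S]=f_S^2$. Square integrability, assumed in the statement, guarantees that every term is finite. Alternatively, one can avoid the explicit counting by showing directly that the recombined mean has Hoeffding components equal to averages of $h_T$ over $n^{|T|}$ independent index patterns.
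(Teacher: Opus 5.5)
Your proof is correct and follows essentially the paper's route. Both use the Hoeffding decomposition, expand the variance into pairwise covariances, and count index pairs to reach the exact identity $\Var \frac{1}{n^m}\sum_{\mathbf j} f(X^{\mathbf j}) = \sum_{T\neq\emptyset} n^{-|T|}\Var h_T \le \frac{1}{n}\Var f$.

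The only difference is bookkeeping. You evaluate each covariance as $\Var \E[f\mid X_S]$ for the exact agreement set $S$ and collapse the resulting count with the binomial theorem. The paper instead uses orthogonality of the components across copies and directly counts pairs with $S\subseteq j\cap k$, which gives the coefficient $n^{2m-|S|}$. Your conditioning argument makes the cross-copy vanishing step, which the paper only attributes to orthogonality, more transparent.
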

\begin{proof}
Write $f = f(X_1, \ldots, X_m)$ for short and consider the Hoeffding decomposition defined recursively by
\[
f_\varnothing = \E[f],
\]
and for $A \subseteq \{1, \ldots, m\}$,
\[
f_A = E[f \mid X_i, i \in A] - \sum_{B \subsetneq A} f_B.
\]
This decomposition is orthogonal (i.e. $\E[f_A f_B] = 0$ if $A \neq B$) \citep[Lemma 11.11]{Vaart_1998}, and we have
\[
f = \sum_{S \subseteq \{1, \ldots m\}} f_S
\]
and thus
\[
\Var f = \sum_S \Var f_S.
\]
We now proceed with the proof by writing the variance of the sum into covariance terms and counting contributions like in the case of $m=2$ above. Write $j = (j_1, \ldots, j_m)$ and $k = (k_1, \ldots, k_m)$ for two multi-indices, $j \cap k = \{i \mid j_i = k_i \}$ for the positions where they agree and $f^j$ for $f(X_1^{j_1}, \ldots, X_m^{j_m})$.
We have
\begin{align*}
  &\Var \frac{1}{n^m} \sum_j  f^j \\
= &\frac{1}{n^{2m}} \sum_{j, k} \Cov(f^j, f^k) \\
= &\frac{1}{n^{2m}} \sum_{j, k} \Cov(\sum_S f_S^j, \sum_T f_T^k) \\
= &\frac{1}{n^{2m}} \sum_{j, k} \sum_{S \subseteq j \cap k} \Cov(f_S^j, f_S^k) \\
= &\frac{1}{n^{2m}} \sum_{j, k} \sum_{S \subseteq j \cap k} \Var f_S
\end{align*}
by orthogonality and the fact that $f_S^j$ and $f_S^k$ depend on the same random variables.

What is the coefficient of the term $\Var f_S$ in this sum? Since the indices must agree at the positions in $S$, there are $n$ options for each $j_i = k_i$ for $i \in S$, giving $n^{|S|}$ combinations. At the other indices $i \notin S$, the indices $j_i$ and $k_i$ can take any values ($j_i = k_i$ is possible because $S \subseteq j \cap k$, not $S = j \cap k$), giving another $n^{2m - 2|S|}$ combinations and a total of $n^{2m - |S|}$. Therefore, we get
\[
  \Var \frac{1}{n^m} \sum_j  f^j 
= \frac{1}{n^{2m}} \sum_{j, k} \sum_{S \subseteq j \cap k} \Var f_S 
= \frac{1}{n^{2m}} \sum_S n^{2m - |S|} \Var f_S 
=  \sum_S \frac{1}{n^{|S|}} \Var f_S
\]
and using that $\Var f_\varnothing = \Var \E[f] = 0$ and $|S| \geq 1$ for $S \neq \varnothing$,
\[
= \sum_{S \neq \varnothing} \frac{1}{n^{|S|}} \Var f_S \\
= \sum_{S \neq \varnothing} \frac{1}{n^{|S|}} \Var f_S \\
\leq \sum_{S \neq \varnothing} \frac{1}{n} \Var f_S \\
= \frac{1}{n} \Var f
\]
according to the variance decomposition stated above.
\end{proof}
The bound in the theorem is tight, as evidenced by functions $f$ which are linear in scalar random variables $X_i$. In the other extreme, if $f(X_1, \ldots, X_n) = \prod_i X_i$ and $\E[X_i] = 0$ for all $i$, then all non-diagonal covariance terms disappear and the left-hand side is $\frac{1}{n^{2m-1}} \Var f$ while the right-hand side is $\frac{1}{n} \Var f$.

Let $X_1, \ldots X_n$ and $Y_1, \ldots, Y_n$ be independent random variables such that $X_i$ identically distributed and $Y_i$ identically distributed. Let $f$ be a deterministic function such that $Z_i = f(X_i, Y_i)$ has finite mean $\mu$ and finite variance $\sigma^2$.

According to the central limit theorem, with $S_n = \frac{Z_1 + \ldots + Z_n}{n}$ we have
\[
\sqrt{n} (S_n - \mu) \to \Norm(0, \sigma^2)
\]
in distribution.

Now consider the recombining estimator
\[
T_n = \frac{1}{n^2} \sum_{i,j \leq n} f(X_i, Y_j).
\]

Clearly, $\E T_n = \mu$. In theorem \ref{thm:var} we established
\[
\Var T_n \leq \Var S_n = \frac{1}{n} \sigma^2
\]
using the Hoeffding decomposition of $f(X_i, Y_j)$.

More precisely, setting $f_X = \E[f(X, Y) \mid X] - \mu$, $f_Y = \E[f(X, Y) \mid Y] - \mu$, $f_{XY} = f(X, Y) - f_X - f_Y - \mu$ and $f'(X, Y) = \mu + f_X + f_Y + \frac{1}{\sqrt{n}} f_{XY}$, we derived (implicitly in the proof, and noting the change of notation for $f_X$, $f_Y$)
\[
\Var T_n = \frac{1}{n} \Var f_X + \frac{1}{n} \Var f_Y + \frac{1}{n^2} \Var f_{XY} = \frac{1}{n} \Var f'.
\]
Now set $S_n' = \frac{f'(X_1, Y_1) + \ldots + f'(X_n, Y_n)}{n}$, observe that $\E f' = \mu$ and write
\[
\sigma'^2 = \Var f' = \Var f_X + \Var f_Y + \frac{1}{n} \Var f_{XY} \leq \sigma^2.
\]
According to the central limit theorem,
\[
\sqrt{n} (S'_n - \mu) \to \Norm(0, \sigma'^2)
\]

We will now prove that \footnote{Similar to \url{https://math.stackexchange.com/questions/2760951/}.}
\[
\sqrt{n} \|T_n - S'_n \|_2 \to 0
\]
with $n \to \infty$.
Indeed,
\[
n \| T_n - S'_n \|_2^2 = 
\frac{1}{n^3} \sum_{i,j,k,l} \left(f(X_i, Y_j) - f'(X_i, Y_i)\right)
\left(f(X_k, Y_l) - f'(X_k, Y_k) \right)
\]
Subtracting $\mu$ from all $f$- and $f'$-terms, proceeding as in the proof of Prop. B.1 and observing that $\Cov(f(X_i, Y_j), f'(X_i, Y_i)) = \Var f_X$,  $\Cov(f(X_i, Y_j), f'(X_j, Y_j)) = \Var f_Y$,  $\Cov(f(X_i, Y_i), f'(X_i, Y_i)) = \Var f'$, we get
\begin{align*}
n \| T_n - S'_n \|_2^2
&= \frac{1}{n^3} \left(
n^2 \Var f
+ n^3 \Var f'
- n^2(n-1) \Var f_X
- n^2(n-1) \Var f_Y
\right) \\
&= \frac{1}{n^3} \left(
n^2 \Var f
+ n^3 (\Var f_X + \Var f_Y + \frac{1}{n} \Var f_{XY})
- n^2(n-1) \Var f_X
- n^2(n-1) \Var f_Y
\right),
\end{align*}
so all $n^3$ terms within the brackets cancel and
\[
n \| T_n - S'_n \|_2^2 \to 0
\]
with $n \to \infty$ as claimed.

Since $\sqrt{n} (T_n - \mu) \to \sqrt{n} (S_n' - \mu)$ in $L_2$ and $\sqrt{n} (S_n' - \mu) \to \Norm(0, \sigma'^2)$ in distribution, $\sqrt{n} (T_n - \mu) \to \Norm(0, \sigma'^2)$ in distribution, satisfying a CLT.

We now state the Delta Method theorem\footnote{\url{https://www.stat.rice.edu/~dobelman/notes_papers/math/TaylorAppDeltaMethod.pdf}}.

\begin{theorem}
Let $S_n$ be a sequence of random variables that satisfies $\sqrt{n} (S_n - \mu) \to \Norm(0, \sigma^2)$ in distribution. For a given function $u$, suppose that $u'(\mu)$ exists and is not 0.

Then, 
\[
\sqrt{n} (u(S_n) - u(\mu)) \to \Norm(0, \sigma^2 u'(\mu)^2)
\]
in distribution.
\end{theorem}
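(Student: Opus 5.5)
The plan is the classical first-order Taylor (linearization) argument, combined with Slutsky's lemma. The idea is to write $u(S_n)$ as $u(\mu)$ plus a linear term in $S_n - \mu$ plus a remainder, show the remainder vanishes at the $\sqrt{n}$ scale, and read off the limit from the linear term.

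\emph{Step 1: consistency.} Since $\sqrt{n}(S_n - \mu)$ converges in distribution, it is tight, i.e.\ $O_P(1)$. Multiplying by $1/\sqrt{n} \to 0$ gives $S_n - \mu = O_P(n^{-1/2})$. In particular $S_n \to \mu$ in probability.

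\emph{Step 2: remainder function.} Define an auxiliary function $h$ by
\[
h(x) = \frac{u(x) - u(\mu)}{x - \mu} - u'(\mu) \quad \text{for } x \neq \mu, \qquad h(\mu) = 0.
\]
Differentiability of $u$ at $\mu$ is exactly the statement that $h$ is continuous at $\mu$. This gives the exact identity
\[
\sqrt{n}\,(u(S_n) - u(\mu)) = u'(\mu)\,\sqrt{n}(S_n - \mu) + h(S_n)\,\sqrt{n}(S_n - \mu).
\]

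\emph{Step 3: the remainder vanishes.} Continuity of $h$ at $\mu$ (and only at $\mu$) suffices for the continuous mapping theorem here, because the limit of $S_n$ is the constant $\mu$. Concretely, for any $\varepsilon > 0$ pick $\delta > 0$ with $|h(x)| < \varepsilon$ whenever $|x - \mu| < \delta$. Then
\[
\Prob(|h(S_n)| \geq \varepsilon) \leq \Prob(|S_n - \mu| \geq \delta) \to 0,
\]
so $h(S_n) \to 0$ in probability. Since $\sqrt{n}(S_n - \mu) = O_P(1)$ by Step 1, the product $h(S_n)\sqrt{n}(S_n - \mu)$ tends to $0$ in probability.

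\emph{Step 4: conclude.} The main term $u'(\mu)\sqrt{n}(S_n - \mu)$ converges in distribution to $u'(\mu)\,\Norm(0, \sigma^2) = \Norm(0, \sigma^2 u'(\mu)^2)$ by the continuous mapping theorem applied to a linear map. Slutsky's lemma then combines this with Step 3 and yields the claim.

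The assumption $u'(\mu) \neq 0$ is not needed for the argument; when $u'(\mu) = 0$ the limit is the degenerate law $\Norm(0,0)$. The condition only ensures the limit is nondegenerate.

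The main obstacle is Step 3. Only pointwise differentiability of $u$ at $\mu$ is assumed, so a mean-value or Taylor-with-remainder argument is unavailable. This is why I use the difference-quotient function $h$: it needs continuity only at the single point $\mu$, and convergence of $S_n$ to a constant is exactly what makes that local continuity enough.

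In the paper's setting, this theorem would be applied with $S_n$ replaced by the recombining estimator $T_n$. The CLT $\sqrt{n}(T_n - \mu) \to \Norm(0, \sigma'^2)$ was established just above, and $u$ is the utility function attached to $\lme_\beta$. The result is asymptotic normality of $u(T_n)$ with variance $\sigma'^2 u'(\mu)^2 \leq \sigma^2 u'(\mu)^2$.
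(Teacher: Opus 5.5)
Your proof is correct, and it is the same argument the paper relies on: the paper gives no proof of its own, only a footnote citing lecture notes that prove the result by a first-order Taylor expansion followed by Slutsky's lemma. Your difference-quotient function $h$ is a more careful version of that step, since it shows the $\sqrt{n}$-scaled remainder vanishes using only pointwise differentiability of $u$ at $\mu$.
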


We have seen that in the setup of our paper with $u$ a concave utility function implicitly defined by a reward aggregation both the recombination estimator $T_n$ and the standard (diagonal) estimator $S_n$ satisfy a CLT with variances $\sigma'^2$ and $\sigma^2$, respectively. Applying the Delta Method theorem, we can conclude that $u(T_n)$ and $u(S_n)$ likewise satisfy a CLT but with different variances $\sigma'^2 u'(\mu)$ and $\sigma^2 u'(\mu)$, respectively. Since $\sigma'^2 = \sigma^2 - \frac{n-1}{n} \Var f_{XY}$, the recombination estimator is an improvement over the standard estimator, not just for $u = \mathrm{id}$, but also in the presence of nonlinear utility functions (reward aggregation functions) $u$ in the $n \to \infty$ limit.

%% file: figures/reweight.tex
\begin{figure}[t]
\centering
\begin{subfigure}[t]{0.33\textwidth}
\centering
\includegraphics[width=\textwidth]{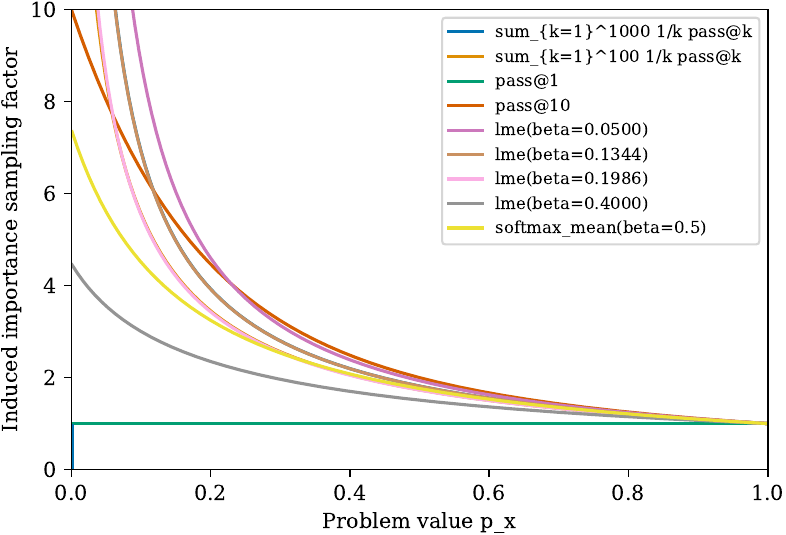}
\caption{Importance sampling weights.}
\label{fig:reweight}
\end{subfigure}
\hfill
\begin{subfigure}[t]{0.3\textwidth}
\centering
\includegraphics[width=\textwidth]{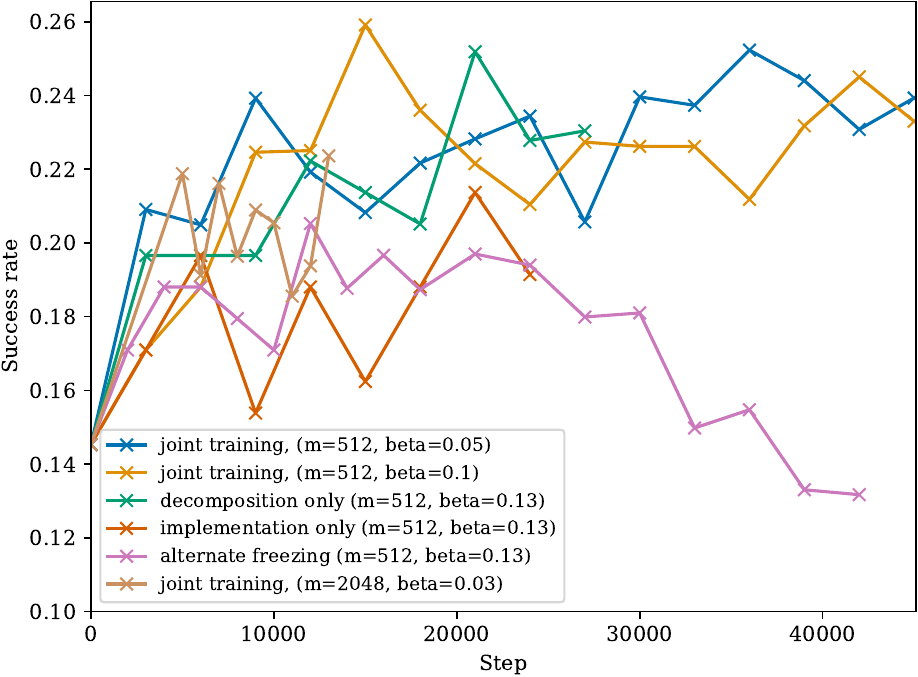}
\caption{Decomposition bottleneck.}
\label{fig:at_10}
\end{subfigure}
\hfill
\begin{subfigure}[t]{0.3\textwidth}
\centering
\includegraphics[width=\textwidth]{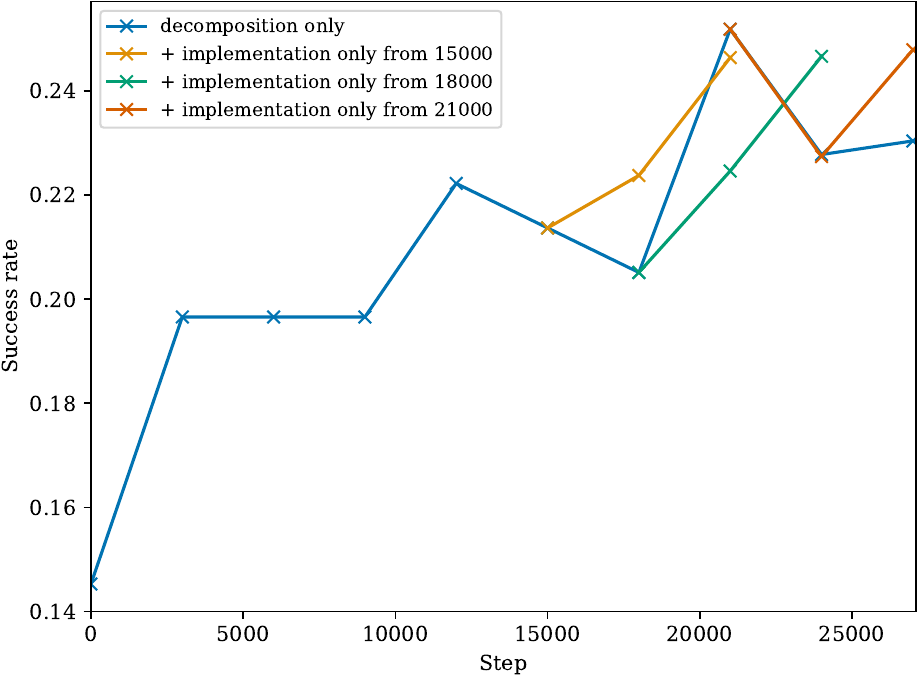}
\caption{Sequential training.}
\label{fig:continue}
\end{subfigure}
\caption{\textbf{Training dynamics and ablations.} (a)~Importance sampling weights induced by explorative utility functions enable jointly optimizing pass@$1$ to pass@$k$. (b)~Ablation on CodeContests validation pass@$10$ shows that removing implementation training does not hurt performance, indicating the decomposition policy is the main RL bottleneck. (c)~Continuing a decomposition-only run with implementation training at 15k, 18k, and 21k steps yields checkpoints among the best but within variance of decomposition-only results.}
\label{fig:training_ablations}
\end{figure}

%% file: figures/grids.tex
\begin{figure}[H]
    \centering
    \begin{subfigure}[b]{0.3\textwidth}
        \centering
        \includegraphics[width=\textwidth]{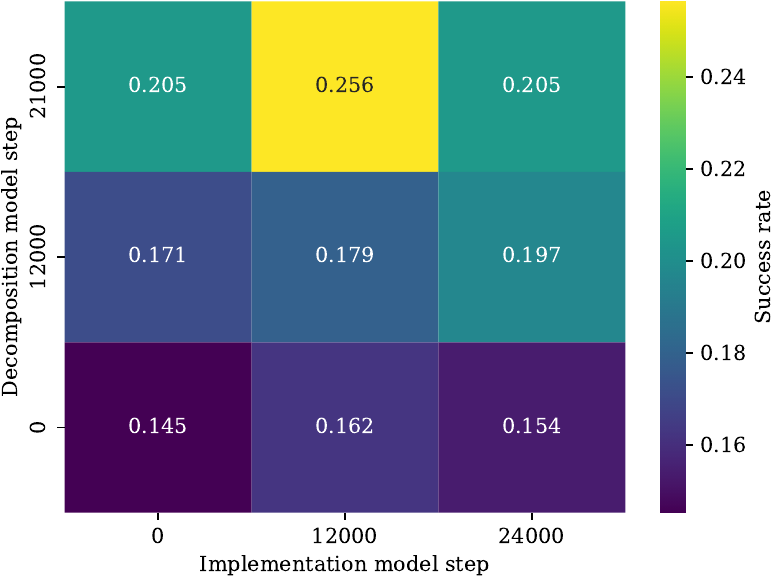}
        \label{fig:grid1}
    \end{subfigure}
    \hfill
    \begin{subfigure}[b]{0.3\textwidth}
        \centering
        \includegraphics[width=\textwidth]{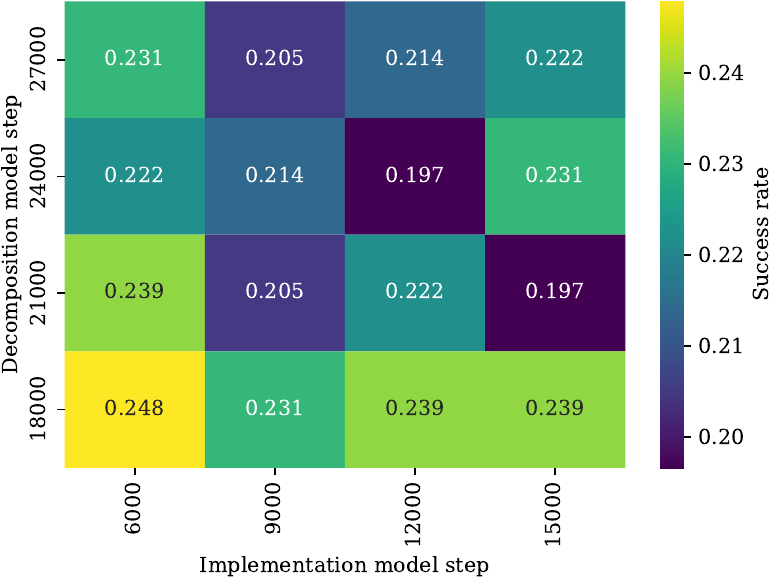}
        \label{fig:grid2}
        
    \end{subfigure}
    \hfill
    \begin{subfigure}[b]{0.3\textwidth}
        \centering
        \includegraphics[width=\textwidth]{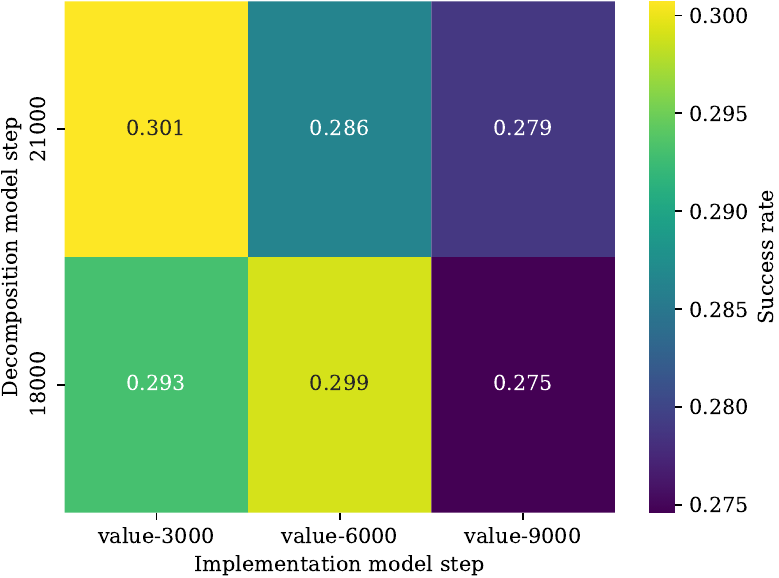}
        \label{fig:grid3}
    \end{subfigure}
    \caption{\textbf{Performance of different combinations of independently trained decomposition and implementation models.} We train a model on decompositions with implementations from fixed Llama 3.1 8B Instruct, and another model on implementations with decompositions from fixed Llama 3.1 8B Instruct, and evaluate cross combinations between the resulting checkpoints. This off-policy estimation is not guaranteed to work in principle but some combinations appear to work in practice. Note that the best configuration is the decomposition checkpoint with 21k steps coupled with the implementation checkpoint with 3k steps, highlighting the importance of joint optimization.
    Evaluations are conducted on CodeContests valid with 4096 function evaluations per decomposition, $k=8$ implementations per function and 10 decompositions per problem.}
    \label{fig:grid}
\end{figure}

%% file: figures/entropy.tex
\begin{figure}[t]
    \centering
    \begin{subfigure}[b]{0.45\textwidth}
        \centering
        \includegraphics[width=\textwidth]{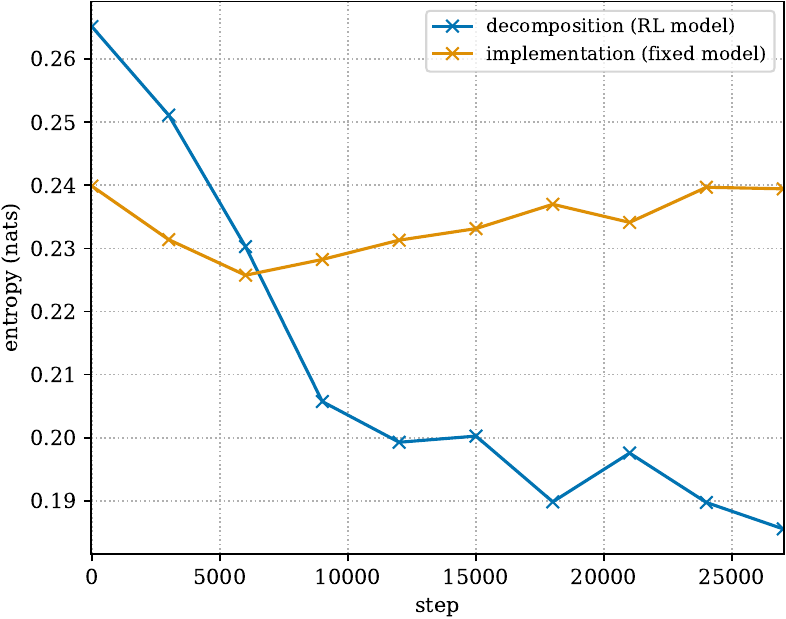}
        \label{fig:entropy_decomp}
    \end{subfigure}
    \hfill
    \begin{subfigure}[b]{0.45\textwidth}
        \centering
        \includegraphics[width=\textwidth]{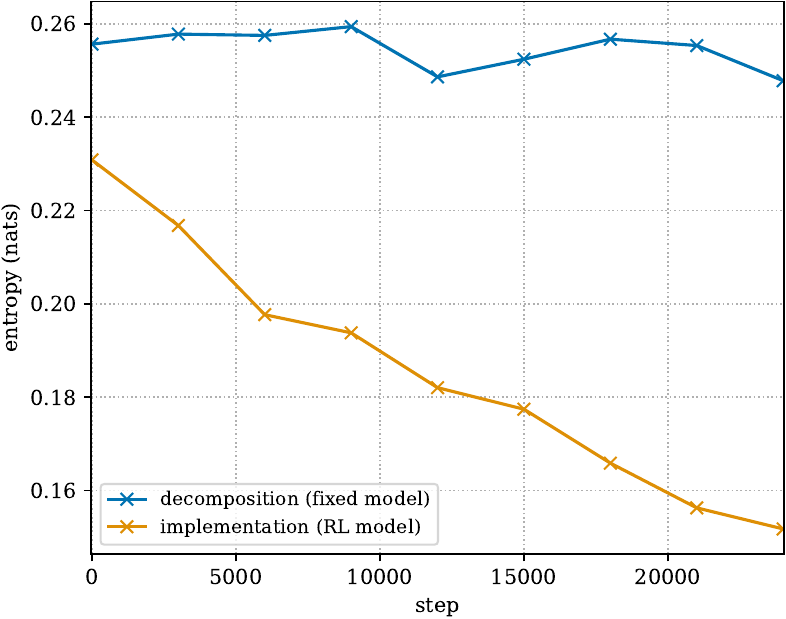}
        \label{fig:entropy_impl}
    \end{subfigure}
    \caption{\textbf{Policy entropy during decomposition policy training (left) and implementation policy training (right).} All ablations in this figure were run during decomposition or implementation policy training, respectively, \emph{not} at evaluation time. \textbf{Left:} decomposition-only training, the implementation policy of a fixed model only changes marginally with the changing prompts from the decomposition model while decomposition policy entropy decreases. \textbf{Right:} implementation-only training; the implementation policy entropy decreases over the course of training.}
    \label{fig:entropy}
\end{figure}

%% file: tables/tokens.tex
\begin{table}[H]
\centering
\begin{tabular}{llr}
\hline
\textbf{Model} & \textbf{RL Algorithm} & \textbf{Avg Tokens per Attempt} \\
\hline
\multirow{5}{*}{Llama 3.1 8B Instruct} & no training & 317 \\
 & GRPO & 215 \\
 & pass@8 & 942 \\
 & SPO & 295 \\
 & DecompRL & 10495 \\
\hline
\multirow{6}{*}{Qwen 2.5 7B} 
& no training & 387 \\
& GRPO & 545 \\
& pass@8 & 570 \\
& logmeanexp, 16 samples & 500 \\
& logmeanexp, 48 samples & 538 \\
& DecompRL & 4000 \\
\hline
\end{tabular}
\caption{\textbf{Token count per code attempt.} Overall RL training increases the the number of generated tokens per attempt. A DecompRL attempt encompasses 1 decomposition and $n \times 8$ implementations for $n$ functions so it contains much more tokens.}
\label{tab:tokens}
\end{table}

%% file: paper.bbl
\begin{thebibliography}{66}
\providecommand{\natexlab}[1]{#1}
\providecommand{\url}[1]{\texttt{#1}}
\expandafter\ifx\csname urlstyle\endcsname\relax
  \providecommand{\doi}[1]{doi: #1}\else
  \providecommand{\doi}{doi: \begingroup \urlstyle{rm}\Url}\fi

\bibitem[AI~@~Meta(2024)]{meta2024llama}
Llama~Team AI~@~Meta.
\newblock The {{Llama}} 3 {{Herd}} of {{Models}}, 2024.

\bibitem[Albrecht et~al.(2024)Albrecht, Christianos, and Sch\"afer]{marl-book}
Stefano~V. Albrecht, Filippos Christianos, and Lukas Sch\"afer.
\newblock \emph{Multi-Agent Reinforcement Learning: Foundations and Modern
  Approaches}.
\newblock MIT Press, 2024.
\newblock \url{https://www.marl-book.com}.

\bibitem[Anthony et~al.(2017)Anthony, Tian, and Barber]{anthony2017thinking}
Thomas Anthony, Zheng Tian, and David Barber.
\newblock Thinking fast and slow with deep learning and tree search.
\newblock \emph{Advances in neural information processing systems}, 30, 2017.

\bibitem[Brown et~al.(2024)Brown, Juravsky, Ehrlich, Clark, Le, R{\'e}, and
  Mirhoseini]{brown2024large}
Bradley Brown, Jordan Juravsky, Ryan Ehrlich, Ronald Clark, Quoc~V Le,
  Christopher R{\'e}, and Azalia Mirhoseini.
\newblock Large language monkeys: Scaling inference compute with repeated
  sampling.
\newblock \emph{arXiv preprint arXiv:2407.21787}, 2024.

\bibitem[Chen et~al.(2024)Chen, Tang, Chu, Chen, Wang, Liu, and
  Qin]{divide_conquer}
Jingchang Chen, Hongxuan Tang, Zheng Chu, Qianglong Chen, Zekun Wang, Ming Liu,
  and Bing Qin.
\newblock Divide-and-conquer meets consensus: Unleashing the power of functions
  in code generation, 2024.
\newblock \url{https://arxiv.org/abs/2405.20092}.

\bibitem[Chen et~al.(2021)Chen, Tworek, Jun, Yuan, Pinto, Kaplan, Edwards,
  Burda, Joseph, Brockman, et~al.]{chen2021evaluating}
Mark Chen, Jerry Tworek, Heewoo Jun, Qiming Yuan, Henrique Ponde De~Oliveira
  Pinto, Jared Kaplan, Harri Edwards, Yuri Burda, Nicholas Joseph, Greg
  Brockman, et~al.
\newblock Evaluating large language models trained on code.
\newblock \emph{arXiv preprint arXiv:2107.03374}, 2021.

\bibitem[Chen et~al.(2025)Chen, Qin, Wu, Ling, Ye, Zhao, and Shi]{chen2025pass}
Zhipeng Chen, Xiaobo Qin, Youbin Wu, Yue Ling, Qinghao Ye, Wayne~Xin Zhao, and
  Guang Shi.
\newblock Pass@ k training for adaptively balancing exploration and
  exploitation of large reasoning models.
\newblock \emph{arXiv preprint arXiv:2508.10751}, 2025.

\bibitem[Christiano et~al.(2023)Christiano, Leike, Brown, Martic, Legg, and
  Amodei]{christiano2023deepreinforcementlearninghuman}
Paul Christiano, Jan Leike, Tom~B. Brown, Miljan Martic, Shane Legg, and Dario
  Amodei.
\newblock Deep reinforcement learning from human preferences, 2023.
\newblock \url{https://arxiv.org/abs/1706.03741}.

\bibitem[Cohen et~al.(2025)Cohen, Zhang, Zheng, Tang, Munos, and
  Synnaeve]{cohen2025soft}
Taco Cohen, David~W Zhang, Kunhao Zheng, Yunhao Tang, Remi Munos, and Gabriel
  Synnaeve.
\newblock Soft policy optimization: Online off-policy rl for sequence models.
\newblock \emph{arXiv preprint arXiv:2503.05453}, 2025.

\bibitem[Cui et~al.(2025)Cui, Zhang, Chen, Yuan, Wang, Zuo, Li, Fan, Chen,
  Chen, Liu, Peng, Bai, Ouyang, Cheng, Zhou, and
  Ding]{cui2025entropymechanismreinforcementlearning}
Ganqu Cui, Yuchen Zhang, Jiacheng Chen, Lifan Yuan, Zhi Wang, Yuxin Zuo,
  Haozhan Li, Yuchen Fan, Huayu Chen, Weize Chen, Zhiyuan Liu, Hao Peng, Lei
  Bai, Wanli Ouyang, Yu~Cheng, Bowen Zhou, and Ning Ding.
\newblock The entropy mechanism of reinforcement learning for reasoning
  language models, 2025.
\newblock \url{https://arxiv.org/abs/2505.22617}.

\bibitem[de~Caen et~al.(1981)de~Caen, Gregory, and
  Pullman]{caen1981booleanrank}
Dominique de~Caen, David~A. Gregory, and Norman~J. Pullman.
\newblock The boolean rank of zero-one matrices, 1981.

\bibitem[DeepSeek-AI(2025)]{deepseek_R1}
DeepSeek-AI.
\newblock Deepseek-r1: Incentivizing reasoning capability in llms via
  reinforcement learning, 2025.
\newblock \url{https://arxiv.org/pdf/2501.12948}.

\bibitem[DeepSeek-AI et~al.(2025)DeepSeek-AI, Guo, Yang, Zhang, Song, Zhang,
  Xu, Zhu, Ma, Wang, Bi, et~al.]{guo2025deepseek}
DeepSeek-AI, Daya Guo, Dejian Yang, Haowei Zhang, Junxiao Song, Ruoyu Zhang,
  Runxin Xu, Qihao Zhu, Shirong Ma, Peiyi Wang, Xiao Bi, et~al.
\newblock Deepseek-r1: Incentivizing reasoning capability in llms via
  reinforcement learning, 2025.
\newblock \url{https://arxiv.org/abs/2501.12948}.

\bibitem[Dong and Ma(2025)]{dong2025stpselfplayllmtheorem}
Kefan Dong and Tengyu Ma.
\newblock Stp: Self-play llm theorem provers with iterative conjecturing and
  proving, 2025.
\newblock \url{https://arxiv.org/abs/2502.00212}.

\bibitem[Ellis et~al.(2023)Ellis, Wong, Nye, Sable-Meyer, Cary, Anaya~Pozo,
  Hewitt, Solar-Lezama, and Tenenbaum]{ellis2023dreamcoder}
Kevin Ellis, Lionel Wong, Maxwell Nye, Mathias Sable-Meyer, Luc Cary, Lore
  Anaya~Pozo, Luke Hewitt, Armando Solar-Lezama, and Joshua~B Tenenbaum.
\newblock Dreamcoder: growing generalizable, interpretable knowledge with
  wake--sleep bayesian program learning.
\newblock \emph{Philosophical Transactions of the Royal Society A},
  381\penalty0 (2251):\penalty0 20220050, 2023.

\bibitem[{FAIR CodeGen team} et~al.(2025){FAIR CodeGen team}, :, Copet,
  Carbonneaux, Cohen, Gehring, Kahn, Kossen, Kreuk, McMilin, Meyer, Wei, Zhang,
  Zheng, Armengol-Estapé, Bashiri, Beck, Chambon, Charnalia, Cummins, Decugis,
  Fisches, Fleuret, Gloeckle, Gu, Hassid, Haziza, Idrissi, Keller, Kindi,
  Leather, Maimon, Markosyan, Massa, Mazaré, Mella, Murray, Muzumdar, O'Hearn,
  Pagliardini, Pedchenko, Remez, Seeker, Selvi, Sultan, Wang, Wehrstedt, Yoran,
  Zhang, Cohen, Adi, and Synnaeve]{cwm}
{FAIR CodeGen team}, :, Jade Copet, Quentin Carbonneaux, Gal Cohen, Jonas
  Gehring, Jacob Kahn, Jannik Kossen, Felix Kreuk, Emily McMilin, Michel Meyer,
  Yuxiang Wei, David Zhang, Kunhao Zheng, Jordi Armengol-Estapé, Pedram
  Bashiri, Maximilian Beck, Pierre Chambon, Abhishek Charnalia, Chris Cummins,
  Juliette Decugis, Zacharias~V. Fisches, François Fleuret, Fabian Gloeckle,
  Alex Gu, Michael Hassid, Daniel Haziza, Badr~Youbi Idrissi, Christian Keller,
  Rahul Kindi, Hugh Leather, Gallil Maimon, Aram Markosyan, Francisco Massa,
  Pierre-Emmanuel Mazaré, Vegard Mella, Naila Murray, Keyur Muzumdar, Peter
  O'Hearn, Matteo Pagliardini, Dmitrii Pedchenko, Tal Remez, Volker Seeker,
  Marco Selvi, Oren Sultan, Sida Wang, Luca Wehrstedt, Ori Yoran, Lingming
  Zhang, Taco Cohen, Yossi Adi, and Gabriel Synnaeve.
\newblock Cwm: An open-weights llm for research on code generation with world
  models, 2025.
\newblock \url{https://arxiv.org/abs/2510.02387}.

\bibitem[Feng et~al.(2024)Feng, Wan, Wen, McAleer, Wen, Zhang, and
  Wang]{tree_search}
Xidong Feng, Ziyu Wan, Muning Wen, Stephen~Marcus McAleer, Ying Wen, Weinan
  Zhang, and Jun Wang.
\newblock Alphazero-like tree-search can guide large language model decoding
  and training, 2024.
\newblock \url{https://arxiv.org/abs/2309.17179}.

\bibitem[Foerster et~al.(2024)Foerster, Farquhar, Afouras, Nardelli, and
  Whiteson]{foerster2024counterfactualmultiagentpolicygradients}
Jakob Foerster, Gregory Farquhar, Triantafyllos Afouras, Nantas Nardelli, and
  Shimon Whiteson.
\newblock Counterfactual multi-agent policy gradients, 2024.
\newblock \url{https://arxiv.org/abs/1705.08926}.

\bibitem[Garey and Johnson(2002)]{garey2002computers}
Michael~R Garey and David~S Johnson.
\newblock \emph{Computers and intractability}, volume~29.
\newblock wh freeman New York, 2002.

\bibitem[Gauthier et~al.(2023)Gauthier, Ol{\v{s}}{\'a}k, and
  Urban]{gauthier2023alien}
Thibault Gauthier, Miroslav Ol{\v{s}}{\'a}k, and Josef Urban.
\newblock Alien coding.
\newblock \emph{International Journal of Approximate Reasoning}, 162:\penalty0
  109009, 2023.

\bibitem[Gehring et~al.(2025)Gehring, Zheng, Copet, Mella, Carbonneaux, Cohen,
  and Synnaeve]{RLEF}
Jonas Gehring, Kunhao Zheng, Jade Copet, Vegard Mella, Quentin Carbonneaux,
  Taco Cohen, and Gabriel Synnaeve.
\newblock Rlef: Grounding code llms in execution feedback with reinforcement
  learning, 2025.
\newblock \url{https://arxiv.org/abs/2410.02089}.

\bibitem[Grayeli et~al.(2024)Grayeli, Sehgal, Costilla~Reyes, Cranmer, and
  Chaudhuri]{grayeli2024symbolic}
Arya Grayeli, Atharva Sehgal, Omar Costilla~Reyes, Miles Cranmer, and Swarat
  Chaudhuri.
\newblock Symbolic regression with a learned concept library.
\newblock \emph{Advances in Neural Information Processing Systems},
  37:\penalty0 44678--44709, 2024.

\bibitem[Gulcehre et~al.(2023)Gulcehre, Paine, Srinivasan, Konyushkova, Weerts,
  Sharma, Siddhant, Ahern, Wang, Gu, et~al.]{rest}
Caglar Gulcehre, Tom~Le Paine, Srivatsan Srinivasan, Ksenia Konyushkova, Lotte
  Weerts, Abhishek Sharma, Aditya Siddhant, Alex Ahern, Miaosen Wang, Chenjie
  Gu, et~al.
\newblock Reinforced self-training (rest) for language modeling.
\newblock \emph{arXiv preprint arXiv:2308.08998}, 2023.

\bibitem[Haller et~al.(2024)Haller, Ringer, Rute, and Miranda]{haller2024peano}
Gabriel Haller, Talia Ringer, Jason Rute, and Brando Miranda.
\newblock Peano: Learning formal mathematical reasoning, 2024.
\newblock \url{https://arxiv.org/abs/2405.06738}.

\bibitem[Haluptzok et~al.(2022)Haluptzok, Bowers, and
  Kalai]{haluptzok2022language}
Patrick Haluptzok, Matthew Bowers, and Adam~Tauman Kalai.
\newblock Language models can teach themselves to program better.
\newblock \emph{arXiv preprint arXiv:2207.14502}, 2022.

\bibitem[Ho et~al.(2024)Ho, Shen, Pan, Fang, Liu, Liu, and
  Zhang]{liu2024openllm}
Chia-Tung Ho, Yikang Shen, Jingyu Pan, Chao Fang, Hao Liu, Tianyu Liu, and
  Zhiru Zhang.
\newblock Openllm-rtl: Open dataset and benchmark for llm-aided design of
  digital circuits, 2024.
\newblock \url{https://arxiv.org/abs/2407.14326}.

\bibitem[Hughes et~al.(2024)Hughes, Price, Lynch, Schaeffer, Barez, Koyejo,
  Sleight, Jones, Perez, and Sharma]{hughes2024bestofnjailbreaking}
John Hughes, Sara Price, Aengus Lynch, Rylan Schaeffer, Fazl Barez, Sanmi
  Koyejo, Henry Sleight, Erik Jones, Ethan Perez, and Mrinank Sharma.
\newblock Best-of-n jailbreaking, 2024.
\newblock \url{https://arxiv.org/abs/2412.03556}.

\bibitem[Islam et~al.(2024)Islam, Ali, and Parvez]{mapcoder}
Md.~Ashraful Islam, Mohammed~Eunus Ali, and Md~Rizwan Parvez.
\newblock Mapcoder: Multi-agent code generation for competitive problem
  solving, 2024.
\newblock \url{https://arxiv.org/abs/2405.11403}.

\bibitem[Jain et~al.(2024{\natexlab{a}})Jain, Han, Gu, Li, Yan, Zhang, Wang,
  Solar-Lezama, Sen, and Stoica]{jain2024livecodebench}
Naman Jain, King Han, Alex Gu, Wen-Ding Li, Fanjia Yan, Tianjun Zhang, Sida
  Wang, Armando Solar-Lezama, Koushik Sen, and Ion Stoica.
\newblock Livecodebench: Holistic and contamination free evaluation of large
  language models for code.
\newblock \emph{arXiv preprint arXiv:2403.07974}, 2024{\natexlab{a}}.

\bibitem[Jain et~al.(2024{\natexlab{b}})Jain, Zhang, Chiang, Gonzalez, Sen, and
  Stoica]{jain2023llm}
Naman Jain, Tianjun Zhang, Wei{-}Lin Chiang, Joseph~E. Gonzalez, Koushik Sen,
  and Ion Stoica.
\newblock Llm-assisted code cleaning for training accurate code generators.
\newblock In \emph{The Twelfth International Conference on Learning
  Representations, {ICLR} 2024, Vienna, Austria, May 7-11, 2024}.
  OpenReview.net, 2024{\natexlab{b}}.
\newblock \url{https://openreview.net/forum?id=maRYffiUpI}.

\bibitem[Khot et~al.(2023)Khot, Trivedi, Finlayson, Fu, Richardson, Clark, and
  Sabharwal]{khot2022decomposed}
Tushar Khot, Harsh Trivedi, Matthew Finlayson, Yao Fu, Kyle Richardson, Peter
  Clark, and Ashish Sabharwal.
\newblock Decomposed prompting: {A} modular approach for solving complex tasks.
\newblock In \emph{The Eleventh International Conference on Learning
  Representations, {ICLR} 2023, Kigali, Rwanda, May 1-5, 2023}. OpenReview.net,
  2023.
\newblock \url{https://openreview.net/forum?id=\_nGgzQjzaRy}.

\bibitem[Kingma and Ba(2015)]{kingma2014adam}
Diederik Kingma and Jimmy Ba.
\newblock Adam: A method for stochastic optimization.
\newblock \emph{ICLR}, 2015.

\bibitem[Lample et~al.(2022)Lample, Lachaux, Lavril, Martinet, Hayat, Ebner,
  Rodriguez, and Lacroix]{lample2022hypertree}
Guillaume Lample, Marie-Anne Lachaux, Thibaut Lavril, Xavier Martinet, Amaury
  Hayat, Gabriel Ebner, Aurélien Rodriguez, and Timothée Lacroix.
\newblock Hypertree proof search for neural theorem proving.
\newblock \emph{arXiv preprint arXiv:2205.11491}, 2022.
\newblock \url{https://doi.org/10.48550/arXiv.2205.11491}.

\bibitem[Le et~al.(2023)Le, Chen, Saha, Gokul, Sahoo, and
  Joty]{le2023codechain}
Hung Le, Hailin Chen, Amrita Saha, Akash Gokul, Doyen Sahoo, and Shafiq Joty.
\newblock Codechain: Towards modular code generation through chain of
  self-revisions with representative sub-modules.
\newblock \emph{arXiv preprint arXiv:2310.08992}, 2023.

\bibitem[Li et~al.(2023)Li, Fu, Zhang, Huang, Sun, Lyu, Liu, Jin, and
  Li]{li2023tacotopicsalgorithmiccode}
Rongao Li, Jie Fu, Bo-Wen Zhang, Tao Huang, Zhihong Sun, Chen Lyu, Guang Liu,
  Zhi Jin, and Ge~Li.
\newblock Taco: Topics in algorithmic code generation dataset, 2023.
\newblock \url{https://arxiv.org/abs/2312.14852}.

\bibitem[Li et~al.(2022)Li, Choi, Chung, Kushman, Schrittwieser, Leblond,
  Eccles, Keeling, Gimeno, Dal~Lago, et~al.]{li2022alphacode}
Yujia Li, David Choi, Junyoung Chung, Nate Kushman, Julian Schrittwieser,
  R{\'e}mi Leblond, Tom Eccles, James Keeling, Felix Gimeno, Agustin Dal~Lago,
  et~al.
\newblock Competition-level code generation with alphacode.
\newblock \emph{Science}, 378\penalty0 (6624):\penalty0 1092--1097, 2022.

\bibitem[Light et~al.(2025)Light, Wu, Sun, Yu, Liu, Zhao, Hu, Chen, and
  Cheng]{light2025sfs}
Jonathan Light, Yue Wu, Yiyou Sun, Wenchao Yu, Yanchi Liu, Xujiang Zhao, Ziniu
  Hu, Haifeng Chen, and Wei Cheng.
\newblock {SFS}: Smarter code space search improves {LLM} inference scaling.
\newblock In \emph{The Thirteenth International Conference on Learning
  Representations}, 2025.
\newblock \url{https://openreview.net/forum?id=MCHuGOkExF}.

\bibitem[Lin et~al.(2025)Lin, Tang, Lyu, Wu, Lin, Yang, Li, Xia, Chen, Arora,
  and Jin]{lin2025goedelproverfrontiermodelopensource}
Yong Lin, Shange Tang, Bohan Lyu, Jiayun Wu, Hongzhou Lin, Kaiyu Yang, Jia Li,
  Mengzhou Xia, Danqi Chen, Sanjeev Arora, and Chi Jin.
\newblock Goedel-prover: A frontier model for open-source automated theorem
  proving, 2025.
\newblock \url{https://arxiv.org/abs/2502.07640}.

\bibitem[Loshchilov and Hutter(2019)]{loshchilov2019decoupled}
Ilya Loshchilov and Frank Hutter.
\newblock Decoupled weight decay regularization, 2019.

\bibitem[Madaan et~al.(2023)Madaan, Tandon, Gupta, Hallinan, Gao, Wiegreffe,
  Alon, Dziri, Prabhumoye, Yang, et~al.]{madaan2023self}
Aman Madaan, Niket Tandon, Prakhar Gupta, Skyler Hallinan, Luyu Gao, Sarah
  Wiegreffe, Uri Alon, Nouha Dziri, Shrimai Prabhumoye, Yiming Yang, et~al.
\newblock Self-refine: Iterative refinement with self-feedback.
\newblock \emph{Advances in Neural Information Processing Systems},
  36:\penalty0 46534--46594, 2023.

\bibitem[Mnih et~al.(2013)Mnih, Kavukcuoglu, Silver, Graves, Antonoglou,
  Wierstra, and Riedmiller]{mnih2013playing}
Volodymyr Mnih, Koray Kavukcuoglu, David Silver, Alex Graves, Ioannis
  Antonoglou, Daan Wierstra, and Martin Riedmiller.
\newblock Playing atari with deep reinforcement learning.
\newblock \emph{arXiv preprint arXiv:1312.5602}, 2013.

\bibitem[Noukhovitch et~al.(2025)Noukhovitch, Huang, Xhonneux, Hosseini,
  Agarwal, and Courville]{rlhf_faster}
Michael Noukhovitch, Shengyi Huang, Sophie Xhonneux, Arian Hosseini, Rishabh
  Agarwal, and Aaron Courville.
\newblock Asynchronous rlhf: Faster and more efficient off-policy rl for
  language models, 2025.
\newblock \url{https://arxiv.org/abs/2410.18252}.

\bibitem[OpenAI(2023)]{openai2023gpt4}
OpenAI.
\newblock Gpt-4 technical report, 2023.

\bibitem[Ouyang et~al.(2022)Ouyang, Wu, Jiang, Almeida, Wainwright, Mishkin,
  Zhang, Agarwal, Slama, Ray, et~al.]{ouyang2022training}
Long Ouyang, Jeffrey Wu, Xu~Jiang, Diogo Almeida, Carroll Wainwright, Pamela
  Mishkin, Chong Zhang, Sandhini Agarwal, Katarina Slama, Alex Ray, et~al.
\newblock Training language models to follow instructions with human feedback.
\newblock \emph{Advances in neural information processing systems},
  35:\penalty0 27730--27744, 2022.

\bibitem[Poesia et~al.(2024)Poesia, Broman, Haber, and
  Goodman]{poesia2024learning}
Gabriel Poesia, David Broman, Nick Haber, and Noah Goodman.
\newblock Learning formal mathematics from intrinsic motivation.
\newblock \emph{Advances in Neural Information Processing Systems},
  37:\penalty0 43032--43057, 2024.

\bibitem[Polu et~al.(2022)Polu, Han, Zheng, Baksys, Babuschkin, and
  Sutskever]{polu2022formal}
Stanislas Polu, Jesse~Michael Han, Kunhao Zheng, Mantas Baksys, Igor
  Babuschkin, and Ilya Sutskever.
\newblock Formal mathematics statement curriculum learning, 2022.

\bibitem[Qwen et~al.(2025)Qwen, :, Yang, Yang, Zhang, Hui, Zheng, Yu, Li, Liu,
  Huang, Wei, Lin, Yang, Tu, Zhang, Yang, Yang, Zhou, Lin, Dang, Lu, Bao, Yang,
  Yu, Li, Xue, Zhang, Zhu, Men, Lin, Li, Tang, Xia, Ren, Ren, Fan, Su, Zhang,
  Wan, Liu, Cui, Zhang, and Qiu]{qwen2025qwen25technicalreport}
Qwen, :, An~Yang, Baosong Yang, Beichen Zhang, Binyuan Hui, Bo~Zheng, Bowen Yu,
  Chengyuan Li, Dayiheng Liu, Fei Huang, Haoran Wei, Huan Lin, Jian Yang,
  Jianhong Tu, Jianwei Zhang, Jianxin Yang, Jiaxi Yang, Jingren Zhou, Junyang
  Lin, Kai Dang, Keming Lu, Keqin Bao, Kexin Yang, Le~Yu, Mei Li, Mingfeng Xue,
  Pei Zhang, Qin Zhu, Rui Men, Runji Lin, Tianhao Li, Tianyi Tang, Tingyu Xia,
  Xingzhang Ren, Xuancheng Ren, Yang Fan, Yang Su, Yichang Zhang, Yu~Wan,
  Yuqiong Liu, Zeyu Cui, Zhenru Zhang, and Zihan Qiu.
\newblock Qwen2.5 technical report, 2025.
\newblock \url{https://arxiv.org/abs/2412.15115}.

\bibitem[Rastogi et~al.(2025)Rastogi, Shao, Padhye, and Yang]{rastogi2025learn}
Srishti Rastogi, Yijia Shao, Rohan Padhye, and Diyi Yang.
\newblock Learn to reason efficiently with adaptive length-based reward
  shaping, 2025.
\newblock \url{https://arxiv.org/abs/2504.01191}.

\bibitem[Rosipal and Girolami(2001)]{rosipal01}
R.~Rosipal and M.~Girolami.
\newblock An expectation-maximization approach to nonlinear component analysis.
\newblock \emph{Neural Computation}, 13:\penalty0 505--510, 2001.

\bibitem[Schulman et~al.(2017)Schulman, Wolski, Dhariwal, Radford, and
  Klimov]{ppo}
John Schulman, Filip Wolski, Prafulla Dhariwal, Alec Radford, and Oleg Klimov.
\newblock Proximal policy optimization algorithms, 2017.
\newblock \url{https://arxiv.org/abs/1707.06347}.

\bibitem[Shao et~al.(2024)Shao, Wang, Zhu, Xu, Song, Bi, Zhang, Zhang, Li, Wu,
  and Guo]{deepseek_grpo}
Zhihong Shao, Peiyi Wang, Qihao Zhu, Runxin Xu, Junxiao Song, Xiao Bi, Haowei
  Zhang, Mingchuan Zhang, Y.~K. Li, Y.~Wu, and Daya Guo.
\newblock Deepseekmath: Pushing the limits of mathematical reasoning in open
  language models, 2024.
\newblock \url{https://arxiv.org/abs/2402.03300}.

\bibitem[Shi et~al.(2024)Shi, Wang, Wan, and Gu]{shi2024code}
Yuling Shi, Songsong Wang, Chengcheng Wan, and Xiaodong Gu.
\newblock From code to correctness: Closing the last mile of code generation
  with hierarchical debugging.
\newblock \emph{arXiv preprint arXiv:2410.01215}, 2024.

\bibitem[Shinn et~al.(2023)Shinn, Cassano, Gopinath, Narasimhan, and
  Yao]{shinn2023reflexion}
Noah Shinn, Federico Cassano, Ashwin Gopinath, Karthik Narasimhan, and Shunyu
  Yao.
\newblock Reflexion: Language agents with verbal reinforcement learning.
\newblock \emph{Advances in Neural Information Processing Systems},
  36:\penalty0 8634--8652, 2023.

\bibitem[Sutton and Barto(1998)]{sutton98book}
R.~Sutton and A.~Barto.
\newblock \emph{Reinforcement learning: An introduction}.
\newblock MIT Press, 1998.

\bibitem[Tang et~al.(2025)Tang, Zheng, Synnaeve, and Munos]{tang2025optimizing}
Yunhao Tang, Kunhao Zheng, Gabriel Synnaeve, and R{\'e}mi Munos.
\newblock Optimizing language models for inference time objectives using
  reinforcement learning.
\newblock \emph{arXiv preprint arXiv:2503.19595}, 2025.

\bibitem[Teodorescu et~al.(2023)Teodorescu, Colas, Bowers, Carta, and
  Oudeyer]{teodorescu2023codeplay}
Laetitia Teodorescu, C{\'e}dric Colas, Matthew Bowers, Thomas Carta, and
  Pierre-Yves Oudeyer.
\newblock Codeplay: Autotelic learning through collaborative self-play in
  programming environments.
\newblock In \emph{IMOL 2023-Intrinsically Motivated Open-ended Learning
  workshop at NeurIPS 2023}, 2023.

\bibitem[Vaart(1998)]{Vaart_1998}
A.~W. van~der Vaart.
\newblock \emph{Asymptotic Statistics}.
\newblock Cambridge Series in Statistical and Probabilistic Mathematics.
  Cambridge University Press, 1998.

\bibitem[Wang et~al.(2025)Wang, Cassano, Wu, Bai, Song, Nath, Han, Hendryx,
  Yue, and Zhang]{plansearch}
Evan~Z Wang, Federico Cassano, Catherine Wu, Yunfeng Bai, William Song, Vaskar
  Nath, Ziwen Han, Sean~M. Hendryx, Summer Yue, and Hugh Zhang.
\newblock Planning in natural language improves {LLM} search for code
  generation.
\newblock In \emph{The Thirteenth International Conference on Learning
  Representations}, 2025.
\newblock \url{https://openreview.net/forum?id=48WAZhwHHw}.

\bibitem[Williams(1992)]{10.1007/BF00992696}
Ronald~J. Williams.
\newblock Simple statistical gradient-following algorithms for connectionist
  reinforcement learning.
\newblock \emph{Mach. Learn.}, 8\penalty0 (3–4):\penalty0 229–256, May
  1992.
\newblock ISSN 0885-6125.
\newblock \doi{10.1007/BF00992696}.
\newblock \url{https://doi.org/10.1007/BF00992696}.

\bibitem[Yao et~al.(2023)Yao, Yu, Zhao, Shafran, Griffiths, Cao, and
  Narasimhan]{ToT}
Shunyu Yao, Dian Yu, Jeffrey Zhao, Izhak Shafran, Tom Griffiths, Yuan Cao, and
  Karthik Narasimhan.
\newblock Tree of thoughts: Deliberate problem solving with large language
  models.
\newblock In Alice Oh, Tristan Naumann, Amir Globerson, Kate Saenko, Moritz
  Hardt, and Sergey Levine, editors, \emph{Advances in Neural Information
  Processing Systems 36: Annual Conference on Neural Information Processing
  Systems 2023, NeurIPS 2023, New Orleans, LA, USA, December 10 - 16, 2023},
  2023.
\newblock
  \url{http://papers.nips.cc/paper\_files/paper/2023/hash/271db9922b8d1f4dd7aaef84ed5ac703-Abstract-Conference.html}.

\bibitem[Zelikman et~al.(2022)Zelikman, Mu, Goodman, and Wu]{Star}
Eric Zelikman, Jesse Mu, Noah~D. Goodman, and Yuhuai~Tony Wu.
\newblock Star: Self-taught reasoner bootstrapping reasoning with reasoning.
\newblock 2022.

\bibitem[Zelikman et~al.(2023)Zelikman, Huang, Poesia, Goodman, and
  Haber]{zelikman2023parsel}
Eric Zelikman, Qian Huang, Gabriel Poesia, Noah~D Goodman, and Nick Haber.
\newblock Parsel: A (de-) compositional framework for algorithmic reasoning
  with language models.
\newblock \emph{arXiv preprint arXiv:2212.10561}, 2023.

\bibitem[Zelikman et~al.(2024)Zelikman, Harik, Shao, Jayasiri, Haber, and
  Goodman]{zelikman2024quiet}
Eric Zelikman, Georges Harik, Yijia Shao, Varuna Jayasiri, Nick Haber, and
  Noah~D Goodman.
\newblock Quiet-star: Language models can teach themselves to think before
  speaking.
\newblock \emph{arXiv preprint arXiv:2403.09629}, 2024.

\bibitem[Zhang et~al.(2024)Zhang, Zhoubian, Hu, Yue, Dong, and
  Tang]{zhang2024rest}
Dan Zhang, Sining Zhoubian, Ziniu Hu, Yisong Yue, Yuxiao Dong, and Jie Tang.
\newblock Rest-mcts*: Llm self-training via process reward guided tree search.
\newblock \emph{Advances in Neural Information Processing Systems},
  37:\penalty0 64735--64772, 2024.

\bibitem[Zhou et~al.(2023)Zhou, Sch{\"{a}}rli, Hou, Wei, Scales, Wang,
  Schuurmans, Cui, Bousquet, Le, and Chi]{zhou2022least}
Denny Zhou, Nathanael Sch{\"{a}}rli, Le~Hou, Jason Wei, Nathan Scales, Xuezhi
  Wang, Dale Schuurmans, Claire Cui, Olivier Bousquet, Quoc~V. Le, and Ed~H.
  Chi.
\newblock Least-to-most prompting enables complex reasoning in large language
  models.
\newblock In \emph{The Eleventh International Conference on Learning
  Representations, {ICLR} 2023, Kigali, Rwanda, May 1-5, 2023}. OpenReview.net,
  2023.
\newblock \url{https://openreview.net/forum?id=WZH7099tgfM}.

\bibitem[Zhou et~al.(2024)Zhou, Pujara, Ren, Chen, Cheng, Le, Chi, Zhou,
  Mishra, and Zheng]{self_discover}
Pei Zhou, Jay Pujara, Xiang Ren, Xinyun Chen, Heng-Tze Cheng, Quoc~V. Le, Ed~H.
  Chi, Denny Zhou, Swaroop Mishra, and Huaixiu~Steven Zheng.
\newblock Self-discover: Large language models self-compose reasoning
  structures, 2024.
\newblock \url{https://arxiv.org/abs/2402.03620}.

\end{thebibliography}
